\def\ps@pprintTitle{%
	\let\@oddhead\@empty
	\let\@evenhead\@empty
	\def\@oddfoot{}%
	\let\@evenfoot\@oddfoot}
\newtheorem{definition}{Definition}
\newtheorem{remark}{Remark}
\newtheorem{lemma}{Lemma}
\newtheorem{assumption}{Assumption}
\begin{document}
	\begin{frontmatter}
		\title{Safe Learning-based Gradient-free Model Predictive Control Based on Cross-entropy Method}
		
		\author[label1]{Lei Zheng\fnref{fn1}}
		\author[label2]{Rui Yang}
		\author[label2]{Zhixuan Wu}
		\author[label2]{Jiesen Pan}
		\author[label2]{and Hui Cheng\corref{correspondingauthor}}
		\ead{chengh9@mail.sysu.edu.cn}
		\address[label1]{School of Electronics and Information Technology, Sun Yat-sen University, Guangzhou 510006, China}
		\address[label2]{School of Computer Science and Engineering, Sun Yat-sen University, Guangzhou 510006, China}
		\fntext[fn1]{First author. Email: zhenglei5@mail2.sysu.edu.cn}
		\cortext[correspondingauthor]{Corresponding author}
		
		\begin{abstract}
			In this paper, a safe and learning-based control framework for model predictive control (MPC) is proposed to optimize nonlinear systems with a {non-differentiable} objective function under uncertain environmental disturbances. The control framework integrates a learning-based MPC with an auxiliary controller in a way of minimal intervention. The learning-based MPC augments the prior nominal model with incremental Gaussian Processes to learn the uncertain disturbances. The cross-entropy method (CEM) is utilized as the sampling-based optimizer for the MPC with a {non-differentiable} objective function. A minimal intervention controller is devised with a control Lyapunov function and a control barrier function to guide the sampling process and endow the system with high probabilistic safety. The proposed algorithm shows a safe and adaptive control performance on a simulated quadrotor in the tasks of trajectory tracking and obstacle avoidance under uncertain wind disturbances.
		\end{abstract}
		
		\begin{keyword}
			Model predictive control \sep learning-based control \sep cross-entropy method \sep minimal intervention controller
		\end{keyword}
	\end{frontmatter}
	
	\section{Introduction}
	
	Robots and autonomous systems are increasingly widely applied to solve complex tasks in highly uncertain and dynamic environments \cite{siciliano2016springer}. Operating in such environments requires sophisticated control methods that can adapt to the uncertain environmental disturbances, plan and execute trajectories utilizing the full dynamics and complete the predefined tasks safely. Model predictive control (MPC) \cite{mayne2014model} provides a general framework to consider the system constraints naturally, anticipate future events and take control actions accordingly to complete complex tasks which are encoded in the objective function. However, designing differentiable objective functions corresponding to all requirements of the task is non-trivial \cite{schaal2010learning}. For example, it is challenging to design a differentiable objective function accounting for simultaneously avoiding unexpectedly detected obstacles and aggressively tracking trajectory for a safety-critical quadrotor in a clustered obstacle field. The trade-off between safety and tracking control performance is hard to mediate using a simplified differentiable form. It is more convenient for the designer to compose multiple high-level, simple but possibly {non-differentiable} terms in the objective function, but the resulting {non-differentiable} objective function brings difficulties to the optimization \cite{williams2018robust}. Obtaining the solutions to the optimization problem using regular gradient-based optimizers {is} difficult due to the nonlinear dynamics constraints and {non-differentiable} objective function.
	
	{
    There are some optimization methods such as nonconvex alternating direction method of multipliers (ADMM)\cite{wang2019global}\cite{o2013splitting}, sequential quadratic programming, and interior point methods \cite{wright1997primal}\cite{andersson2019casadi} for optimizing the system with nonconvex or even sparse non-differentiable cost function. However, either linear system or derivative information is required, making it difficult to directly apply to the optimization with nonlinear dynamics constraints and arbitrary non-differentiable objective function.} Sampling-based methods such as random shooting \cite{nagabandi2018neural}, path integral control \cite{williams2017model}, and cross-entropy method (CEM) \cite{de2005tutorial} are effective approaches to solve the optimization problem with a {non-differentiable} objective function.
	The CEM is initially devised to estimate the probability of a rare event and later employed as a general optimization framework.
	{The optimal sampling distribution in CEM is recursively approximated to guide the sampled trajectories toward lower cost regions until converging to a delta distribution\cite{de2005tutorial}.
	As with most randomized methods, there is no guarantee that this would be the global optimum but with high likelihood, the local optimum can be approached if enough samples are generated.} It has been demonstrated to effectively optimize complex nonlinear systems with a {non-differentiable} objective function \cite{kobilarov2012cross}\cite{chua2018deep}.
	An adaptive sampling strategy is developed in \cite{tan2018gaussian} using receding-horizon cross-entropy trajectory optimization with Gaussian Process upper confidence bound \cite{srinivas2009gaussian}. In \cite{bharadhwaj2020model}, the planning problem of high-dimensional systems is solved by interleaving CEM and gradient descent optimization. Actually, as a local search method, the CEM is essentially prone to the model errors caused by unexpected disturbances, which may make the sampling get stuck in a bad region of state space and even diverge, resulting in dangers to safety-critical systems \cite{williams2018robust}. To address this problem, decent initialization and gradient signal are required in this case to lead sampling distribution back to the low-cost region safely. {To illustrate, it is common for robotics systems that the actual next state is close to the predicted one in a high-frequency control scheme. Thus it is reasonable to initialize the sampling distribution of next iteration with information from the last iteration as a type of warm start.} On the other hand, the accuracy of the predictive model under uncertain disturbances should be improved.
	
	\begin{figure}
		\centering
		\includegraphics[width=10cm]{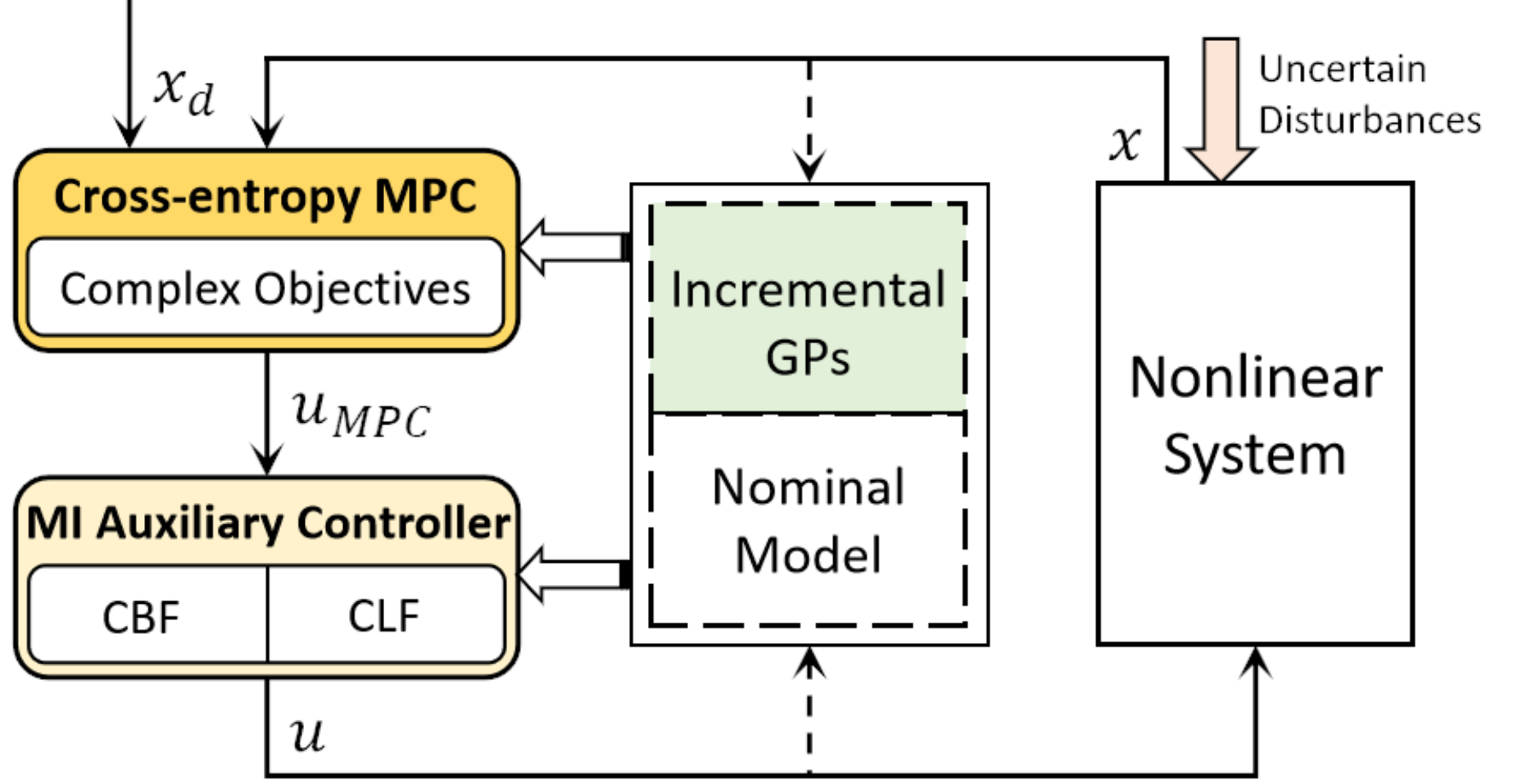}\\
		\caption{Diagram of the proposed control scheme for a nonlinear system under uncertain disturbances. Taking in the desired state $x_d$ and current state $x$, the MPC optimizes the nonlinear system with a simple {non-differentiable} objective function using CEM. The output $u_{MPC}$ of MPC is modified by an auxiliary controller in a way of minimal intervention, which guides the sampling process and preserves system safety. The environmental uncertainties are learned using incremental GPs. The learned model is combined with a prior nominal model to serve in both the CEMPC and the MI{(Minimal Intervention)} auxiliary controller.}\label{fig:structure}
	\end{figure}
	To guide the sampling distribution safely in case of external disturbances while keeping the control proactive, a sampling-based Tube-MPC method is augmented in \cite{williams2018robust} with an iterative linear quadratic Gaussian controller for sampling distribution guidance and disturbance rejection. Model predictive path integral control \cite{williams2017model} is integrated with $\mathcal{L} 1$ adaptive control in \cite{pravitra2020l1} to achieve both fast model predictive trajectory planning and robust trajectory tracking. Besides, the minimum intervention principle has shown an increasing interest in the safe control domain of semi-autonomous vehicles. A minimal intervention (MI) mechanism is designed in \cite{leung2020infusing} to infuse reachability-based safety assurance within the planning frameworks. It essentially projects the desired trajectory into a set of safety-preserving controls whenever safety is threatened, so as to ensure safety and meanwhile keep the behavior as proactive as possible. Similarly, the control barrier function (CBF) is utilized to design the safety barrier certificates in \cite{Luo2020MultiRobotCA} \cite{Hirshberg2020SafetyCI} to minimally modify an existing controller to formally satisfy collision avoidance constraints via a constrained quadratic program (QP). {Multiple obstacles situations can be handled based on CBF \cite{wang2018permissive} \cite{deits2015computing}.} While robust control under bounded uncertainties achieves safety and meanwhile preserves the control performance of the existing controller, the adaptation to disturbances is not considered in \cite{Luo2020MultiRobotCA} \cite{Hirshberg2020SafetyCI}. The CBF and control Lyapunov function (CLF) are integrated into a QP to achieve both safety and tracking stability in \cite{zheng2020learning}, but without a minimal intervention scheme.

	On the other hand, the control performance of MPC depends on the accuracy of the predictive model \cite{mayne2014model}. However, it is generally difficult to obtain precise system models for real-world nonlinear systems. This issue of model discrepancy has been addressed through various adaptation approaches. Some of the existing adaptive MPC approaches assume a structured system model with uncertain parameters that can be estimated online with an estimator such as extended Kalman filter \cite{fukushima2007adaptive}\cite{aswani2013provably}. However, it is limited to treat all model errors as parameters to estimate, especially when the system is subject to complex and external uncertain disturbances \cite{desaraju2016experience}.

	
	Considering model uncertainties resulting from uncertain disturbances, learning-based techniques can also be applied to learn the predictive model in MPC. In the model-based reinforcement learning (RL), the dynamics model is learned from data using a deep neural network (NN) \cite{nagabandi2018neural} or Gaussian Processes (GPs) \cite{urbina2011quantification}\cite{cao2017gaussian}. The effectiveness of CEM for model-based RL is demonstrated in \cite{chua2018deep} with Bayesian neural network ensemble as the predictive model. While avoiding the need {for} manual controller design, these works do not well utilize the prior knowledge on systems and control design \cite{kober2013reinforcement}, which can be used to improve the learning efficiency and provide safety for system design. Taking advantage of prior knowledge of the system, semi-structured approaches augment a prior nominal model with machine learning models to capture the model errors for MPC \cite{hewing2020learning}\cite{desaraju2016experience}. GPs are utilized to learn the model errors, since {they} can simultaneously capture the uncertainty of the estimation due to the lack of data and the noises inherent in the environment \cite{urbina2011quantification}. It is combined with a nominal model as the predictive model for MPC \cite{hewing2018cautious}\cite{ostafew2016robust}\cite{mehndiratta2020gaussian}, {as a kind of robust or stochastic MPC\cite{mayne2016robust}}, where the control performance is shown {to greatly improve} with a more accurate predictive model. While the safety regarding state constraints can be satisfied in the form of chance constraint \cite{hewing2020learning}, {the propagation of state variance along the prediction horizon brings much computation} and the optimization of the resulting nonlinear MPC requires gradient information of the objective function.



	The limitations of the sampling-based MPC and the advantages of safe learning-based control techniques show an urgent need for designing an adaptive, high-performance and safe control strategy for nonlinear system optimization with a {non-differentiable} objective function. Safety should be enforced to the sampling-based optimization while the proactive control ability should be maintained for the nonlinear system under environmental disturbances. To solve this problem, we propose a safe and adaptive predictive control architecture as shown in Fig.\ref{fig:structure}.
	
	
	Our main contributions are summarized as follows.
	\begin{itemize}
		\item A novel learning-based CEM-based MPC (CEMPC) framework is proposed for system optimization with a {non-differentiable} objective function. The CEM is utilized as the optimizer to solve the {non-differentiable} MPC based on a prior predictive model and incremental GPs (IGPs) for {additive state} disturbance estimation.
		\item A minimal intervention (MI) auxiliary controller based on CBF and CLF is devised to intervene {in} the sampling-based MPC, endowing the system with safety and {guiding} the sampling distribution to low-cost regions when necessary. 
		\item The proposed control methodology is validated on a quadrotor tracking aggressive trajectory and simultaneously avoiding detected obstacles under uncertain time-varying wind disturbances in simulation.
	\end{itemize}
	
	The rest of this paper is organized as follows. 
	The problem statement is presented in Section~\ref{section:problem_statement}.
	The IGPs, learning-based CEMPC and the MI auxiliary controller are described in Section~\ref{section:methodology}.
	Numerical simulation of the proposed algorithm on a quadrotor system is shown in Section~\ref{section:experiment}. Finally, a conclusion is drawn in Section~\ref{section:conclusion}.

	\section{Problem Statement}
	\label{section:problem_statement}
	Consider a nonlinear control affine system with dynamics
	\begin{equation}
	\dot{x}=f(x) + G(x)u,
	\label{eq:system_model}
	\end{equation}
	where $x \in \mathcal{X} \subset \mathbb{R}^{n}$ denotes the system state and $u \in \mathcal{U} \subset \mathbb{R}^{m}$ is the control input.
	{A wide range of robots such as quadrotors and car-like vehicles can be transformed into a control affine system in this form. Though our analysis is restricted to this form, the results can be extended to the systems of higher relative degrees.}
	Assume that the function $f:\mathcal{X}\rightarrow\mathbb{R}^n$ is partially unknown but {Lipschitz continuous and} has a bounded reproducing kernel Hilbert space (RKHS) norm under a known kernel, and the function $G:X\rightarrow\mathbb{R}^{n\times {m}}$ is known and Lipschitz continuous. The partially unknown $f(x)$ consists of a known nominal model $\hat{f}(x)$ and the uncertain disturbances $d(x)$,
	\begin{equation}
	\label{eq:model_error}
	f(x)=\hat{f}(x) + d(x).
	\end{equation}
	If there is no disturbance, then the nominal model matches the actual one and $d(x)=0$. However, it is difficult to get an accurate model in advance for practical nonlinear systems under uncertain disturbances, e.g. for a quadrotor under uncertain wind disturbances.
	
	
	The \textbf{goal} is to optimize the system (\ref{eq:system_model}) to accomplish specified complex tasks safely under uncertain environmental disturbances. The specified tasks can be described with the \textbf{cost function} of the form:
	\begin{equation}
	\mathcal{L}(x,u)=\iota(x)^{T}Q\iota(x)+\sum_{i=1}^{N}w_i\mathbb{I}_{C_i}(x)+ u^TR_uu,
	\label{eq:cost_function}
	\end{equation}
	where $\iota(x)$ extracts features from the state, $Q$ and $R_u$ are a {positive semidefinite} and a positive definite weight matrix, respectively. $N$ is the number of simple encodings of task descriptions, $w_i$ is the corresponding weight coefficient, $\mathbb{I}_{C_i}$ is an indicator function for the set $C_i$, which is $1$ if $x\in C_i$ and $0$ otherwise. 
	
	
	The first portion of the cost function (\ref{eq:cost_function}) is to encode the main task for the system, e.g. tracking a trajectory. The second term encodes specific tasks, which could be sparse, {non-differentiable} and hard to rewrite into a typically differentiable form like the first term, e.g. to avoid the unexpectedly detected obstacles for a quadrotor with a limited sensing range. The last term regularizes the control inputs. With this form of the cost function, the tasks can be easily encoded with several interpretable terms and weighted differently according to the importance of task requirements in convenience.
	\begin{remark}
		{Note that though the non-differentiable terms are technically soft constraints, the penalty is obtained immediately once the state falls into the specified sets. Besides, they have the advantage that the importance of different requirements can be delineated by setting different weight coefficients $w_i$.}
	\end{remark}

	\section{Methodology}
	\label{section:methodology}
	In this section, the proposed control scheme for the nonlinear system (\ref{eq:system_model}) is described, as shown in Fig.~\ref{fig:structure}.
	With a predefined objective function consisting of simple and {non-differentiable} encodings of task descriptions, the CEMPC optimizes the nonlinear system with a predictive model. This predictive model is composed of a nominal model from prior knowledge and a discrepancy model learned via GPs. To guide the sampling distribution back toward the low-cost regions in case of disturbances and preserve system safety, the control inputs computed by the CEMPC are modified with a designed MI auxiliary controller in an efficient QP framework. 
	
	The GPs for learning the uncertain disturbances ($\ref{eq:model_error}$) and the incremental implementation to reduce the computational complexity are first introduced in Section~\ref{section:GP}. With the prior and learned model, the CEMPC is then presented in Section~\ref{subsection:learning-based CEMPC} and the MI auxiliary controller is described in the Section~\ref{subsection:MI}.

	\subsection{Increment Gaussian Processes for Disturbance Learning}
	\label{section:GP}
	A GP is an efficiently nonparametric regression method to estimate complex functions and their uncertain distribution~\cite{rasmussen2010gaussian}. It assumes that function values associated with different inputs are random variables, and any finite number of them have a joint Gaussian distribution.	The uncertain model errors $d(x)$ resulted from uncertain disturbances in $(\ref{eq:model_error})$ can be learned using GPs with the data collected from the system during operation. Similar to \cite{ostafew2016robust}, we train $n$ separate GPs to model the disturbances $d(x)$ with the output of $n$ dimensions based on the {following} assumption.  
	\begin{assumption}
		The unknown disturbances $d(x)$ in (\ref{eq:model_error}) are uncorrelated.
		\label{uncorrelated assumption}
	\end{assumption}
	The approximation of disturbances $d(x)$ can be denoted by $\widetilde d$. To make the problem tractable, similar to the Assumption 1 in~\cite{Berkenkamp2016SafeLO}, the following assumption is considered. 
	\begin{assumption}
		The unknown disturbances $d(x)$ has a bounded norm in the associated Reproducing Kernel Hilbert Space (RKHS)~\cite{scholkopf2002learning}, corresponding to a continuously differentiable kernel $k$. 
		\label{regularity assumption}
	\end{assumption}
	This assumption can be interpreted as a requirement {for} the smoothness of the disturbances $d(x)$. Its boundedness implies that $d(x)$ is regular with respect to the kernel $k$ \cite{srinivas2012information}. It is common practice to use the squared-exponential kernel
	$k(x,\ x^\prime)=\sigma_f^2\exp{(-\frac{1}{2}(x-x^\prime)^\mathrm{T}L^{-2}(x-x^\prime))}$ to estimate the similarity between states $x$ and $x\prime$, which is characterized by hyperparameters of the length scale diagonal matrix $L$ and the prior variance $\sigma_f^2$. Besides, we assume that the training set $D$ is available to the GPs for regression.
	\begin{assumption}
		The {predefined} state $x$ and the function value $d(x)$ can be measured with noises over a finite time horizon to make up a training set with $n$ data pairs
	\end{assumption}
	\begin{equation}
	D=\left\{\left(x^{\left(i\right)},\ y^{\left(i\right)}\right)\right\}_{i=1}^n,\ y^{(i)}=d\left(x^{(i)}\right)+\upsilon_i,
	\end{equation}
	where $\upsilon_i$ are i.i.d. noises $\upsilon_i\sim \mathcal{N}\left(0,\sigma_{noise}^2I_n\right)$, $\sigma_{noise}^2\in\mathbb{R}$. 
	
	Given the dataset $D$, the mean and variance of $\widetilde d(x_{*})$ at the query state $x_{*}$ can be given by \cite{rasmussen2010gaussian}
	\begin{equation}
	\mu(x_{*})=k_{n}^{T}(K_\sigma+\sigma_{noise}^{2}I)^{-1}y_{n},
	\label{mean}
	\end{equation}
	\begin{equation}
	\sigma^{2}(x_{*})=k(x_{*},x_{*})-k_{n}^{T}(K_\sigma+\sigma_{noise}^{2}I)^{-1}k_{n},
	\label{var}
	\end{equation}
	respectively, where $y_{n}=[y(x_{1}),y(x_{2}),...,y(x_{n})]^{T}$ is the observed vector, $K_\sigma\in\mathbb{R}^{n\times n}$ is the covariance matrix with entries $[K_\sigma]_{(i,j)}=k(x_{i},x_{j})$, $i,j \in \left\{ 1, ..., n \right\}$, $k_{n}=[k(x_{1},x_{*}),k(x_{2},x_{*}),...,k(x_{n},x_{*})]$ is the vector with kernel function $k(x_{i},x_{j})$, $I_n\in\mathbb{R}^{n\times n}$ is the identity matrix. The variance of noise {$\sigma_{noise}$ can be selected and set according to the actual noise level of the disturbances and the accuracy of measurements.}
	
	A high probability confidence interval $\mathcal D(x)$ on $\widetilde d(x)$ can then be obtained~\cite{Berkenkamp2016SafeLO}
	\begin{equation}
	\mathcal{D}(x)=\{\widetilde d\ |\ \mu(x)-c_{\delta}\sigma(x) \leq \widetilde d \leq \mu(x)+c_{\delta}\sigma(x)\},
	\label{high confidence interval}
	\end{equation}
	where $c_{\delta}$ is a parameter designed to get a confidence interval of $(1 - \delta)$, $\delta \in (0, 1)$.
	For instance, $95.5\%$ and $99.7\%$ confidence of the uncertainty bound can be achieved at $c_{\delta}=2$ and $c_{\delta}=3$, respectively.	
	
	The computational complexity of GPs is $O(n^3)$ due to the matrix inverse of $K$, $K = K_\sigma+\sigma^{2}I$. It brings non-negligible challenges for practical applications. An incremental implementation of GPs is devised to reduce the time of inference and learning by fixing the size of the dataset and recursively computing the matrix inverse. Specifically, the IGP considers two stages as follows.
	
	
	\emph{1) Adding New Data:} Given a predefined size $n$ of the dataset, a new data point is added into the dataset at each time step until the size of the dataset reaches the predefined size. Assume there has been $i$ data points in the dataset, $0<i<n$. Denote the computed kernel matrix as $K_{old} \in \mathbb{R}^{i\times i}$ and the corresponding matrix inverse as $K^{-1}_{old}$ based on the old dataset. {When a new data point $(x_{i+1},y_{i+1})$ is added}, the new kernel matrix $K_{new} \in \mathbb{R}^{(i+1) \times (i+1)}$ can be computed based on $K_{old}$: 
	\begin{equation}
	K_{new}=\begin{bmatrix}K_{old} & \mathbf{k}_{i+1}   \\ \mathbf{k}^T_{i+1} & k_{i+1}+\sigma_{noise}^2I\end{bmatrix},
	\end{equation}
	where {$\mathbf{k}_{i+1}=[k(x_{1},x_{i+1}), k(x_{2},x_{i+1}), \cdots, k(x_{i},x_{i+1})]^\intercal$ and} $k_{i+1} = k(x_{i+1}, x_{i+1})$. The incremental update of the matrix inverse $K_{new}^{-1}$ can be computed by
	\begin{equation}
	K_{new}^{-1}=\begin{bmatrix}K_{old}^{-1}+\xi\Gamma\cdot\Gamma^T & -\xi\Gamma \\ -\xi\Gamma^T & \xi\end{bmatrix}
	\end{equation}
	where $\Gamma=K_{old}^{-1}\cdot \mathbf{k}_{i+1}$ and $\xi=(k_{i+1}+\sigma_{noise}^2I-\mathbf{k}^T_{i+1}\cdot \Gamma)^{-1} \in \mathbb{R}^{1\times 1}$.
	
	\begin{remark}
		The inversion operation only needs to perform on a scalar $\xi$ rather than on the entire matrix $K_{new} \in \mathbb{R}^{(i+1) \times (i+1)}$. Matrix multiplication in the method takes $O(i^2)$ complexity. Thus, this method can reduce the computation burden largely by saving and reusing the computed matrix inverse result, especially when the predefined size $n$ of the dataset is quite large. 
	\end{remark}

	\emph{2) Replacing Data:} If the size of the dataset reaches the predefined size $n$, a new data point will be added to the dataset at each timestep while the oldest one will be deleted. Denote the old kernel matrix in the form of a block matrix as
	\begin{equation}
	K_{old}=\begin{bmatrix}k_0 & \mathbf{k}^T_0 \\ \mathbf{k}_0 & \Upsilon\end{bmatrix},
	\end{equation}
	where $\mathbf{k}_0$ and $k_0$ are the covariance vector and variance value of the oldest data point in the dataset, and $\Upsilon \in \mathbb{R}^{(n-1) \times (n-1)}$ are the sub-matrix at the right bottom corner. The inverse matrix of $K_{old}$ can be computed as 
	\begin{equation}
	K_{old}^{-1}=\begin{bmatrix}\rho & q^T \\ q & \Xi \end{bmatrix},
	\end{equation}
	where $\rho$, $q$ and $\Xi$ is the {sub-matrices}, $\Xi \in \mathbb{R}^{(n-1) \times (n-1)}$. With the obtained new data point, the oldest data point is removed from the dataset, and variance $k_{i+1}$ and covariance vector $\mathbf{k}_{i+1}$ are computed. The new kernel matrix can be obtained based on the old kernel matrix $K_{old}$:
	\begin{equation}
	K_{new}=\begin{bmatrix}\Upsilon &\mathbf{k}_{i+1} \\ \mathbf{k}^T_{i+1} & k_{i+1}+\sigma_{noise}^2I\end{bmatrix},
	\end{equation}
	and the inverse matrix can be computed by 
	\begin{equation}
	K_{new}^{-1}=\begin{bmatrix}\Lambda+(\Lambda\mathbf{k}_{i+1})(\Lambda \mathbf{k}^T_{i+1})l & -(\Lambda \mathbf{k}_{i+1})l \\ -(\Lambda\mathbf{k}^T_{i+1})l & l\end{bmatrix},
	\end{equation}
	where $\Lambda=\Xi-q\cdot q^T\rho^{-1}$, $l=(k_{i+1}+\sigma_{noise}^2I-\mathbf{k}_{i+1}^T\cdot \Lambda\cdot \mathbf{k}_{i+1})^{-1}$.

	\begin{remark}
		Note that there are only 4 times of matrix multiplications in this method. Taking several matrix addition and transposition into consideration, it takes $O(n^2)$ computational complexity.
	\end{remark}

	\subsection{Learning-based Model Predictive Control with Cross-Entropy Method (CEMPC)} 
	
	\label{subsection:learning-based CEMPC}
	With the learned disturbances $\widetilde{d}$ via the IGPs, the predictive model for the MPC can be obtained based on the prior model. Considering the {non-differentiable} objective function, a sampling-based MPC scheme is designed with CEM to provide the nominal controls for the nonlinear systems (\ref{eq:system_model}). The MPC is based on the following open-loop finite horizon optimal control problem given the measured state $x(t_k)$ at each sampling time $t_k=k\cdot ${$T_s$} with a control period {$T_s$}, $k\in \mathbb{N^+}$.
	\begin{alignat}{2}
	u^{*}=\mathop{\arg\min}_{\bar{u}(t)\in \mathcal{PC}([t_k, t_k+T], {\mathcal{U}})}& \  \Phi(\bar{x}(t+T)) + \int_{t_k}^{t_k+T}\mathcal{L}(\bar{x}(\tau),\bar{u}(\tau))d\tau., \label{eq:opt1}\\
	\mbox{s.t.} \quad
	&\dot{\bar{x}}(t) =\hat{f}(\bar{x}(t))+ g(\bar{x}(t))\bar{u}(t) + \widetilde d(\bar{x}(t)), \label{eq:opt2}\\
	&\bar{x}(t_k)=x(t_k),
	\label{eq:opt3}\end{alignat}
	where $\mathcal{PC}([t_k, t_k+T], {\mathcal{U}})$ represents the set of all piece-wise continuous functions $\varphi: [t_k, t_k+T]\rightarrow{\mathcal{U}}$, $\bar{x}$ denotes the state predicted based on the system model (\ref{eq:opt2}) given candidate controls $\bar{u}(t)$ over the prediction horizon $T>0$, $\Phi$ is a terminal cost function, $\mathcal{L}$ is the {non-differentiable} running cost (\ref{eq:cost_function}), and $\widetilde d$ is the approximation to the actual environmental uncertainties $d(x)$. The equation ($\ref{eq:opt3}$) is the state initialization of the finite horizon optimal control problem.
	{The input constraints are taken into account by bounding the control input samples in the control space and limiting the initial mean and variance of the sampling distribution.}
	
	Optimization and execution take place alternatively. With the current state $x(t_k)$ {fedback} at any sampling time $t_k$, the problem (\ref{eq:opt1})-(\ref{eq:opt3}) is solved to obtain the optimal open-loop control inputs $u^{*}(t),\forall t \in [t_k, t_k+T]$, and only the first-step input $u_{MPC}(t)=u^{*}(t),\forall t \in [t_k, t_k+${$T_s$}$]$ is applied to the nonlinear system (\ref{eq:system_model}). The overall process is repeated at the next sampling time $t_{k+1}$.

	\begin{algorithm}[htb] 
		\caption{Learning-based CEMPC} 
		\label{alg:Framwork_1} 
		\begin{algorithmic}[1] 
			\REQUIRE ~~\\ 
			$N$: Number of iterations,\\
			$M$: Sample numbers per iteration,\\
			$H$: Predictive time steps of the MPC,\\
			$K$: Size of the elite set,\\
			$\Sigma_{min}$: A minimum variance bound for optimization,\\
			$\beta$: Update rate,\\
			$\mathcal{N}(O_{0:H-1}^{(0)}, \Sigma_{0:H-1}^{(0)})$: Initial sampling distribution, where $O_{0:H-1}^{(0)}$ and $ \Sigma_{0:H-1}^{(0)}$ denotes the mean and covariance matrix of $H$ separate initial multivariate Gaussian distributions, respectively. \\
			\ENSURE ~~\\ 
			$\mu_{*}$ : Optimized mean value of the control input sampling distribution,\\
			\WHILE{Task is not completed}
			\STATE Measure current state $x_{k}$.
			\STATE $i=0$.
			\WHILE{$i<N$ and $max(\Sigma_{0:H-1}) > \Sigma_{min}$} 
			\STATE Sample $\{(u_0^{(i)},...,u_{H-1}^{(i)})_j\}_{j=0}^{M-1}\sim\mathcal{N}(O_{0:H-1}^{(i)}, \Sigma_{0:H-1}^{(i)})$.
			\STATE Score the samples according to (\ref{eq:score_function}), (\ref{eq:opt2}) and (\ref{eq:opt3}).
			\STATE Sort them in an ascending order, $\mathcal{J}_0^{(i)} \leq ... \leq \mathcal{J}_{M-1}^{(i)}$.
			\STATE Choose $K$ sequences according to $\mathcal{J}_0^{(i)} \leq ... \leq \mathcal{J}_{K-1}^{(i)}$.
			\STATE Update sampling distribution using the elite set \\
			$O_{0:H-1}^{(i+1)} \gets Mean(\{(u_0^{(i)},...,u_{H-1}^{(i)})_j\}_{j=0}^{K-1}),$ \\
			$\Sigma_{0:H-1}^{(i+1)} \gets Var(\{(u_0^{(i)},...,u_{H-1}^{(i)})_j\}_{j=0}^{K-1})$.
			\STATE  $(O_{0:H-1}^{(i+1)},\Sigma_{0:H-1}^{(i+1)})${$\leftarrow$}$(1-\beta)(O_{0:H-1}^{(i+1)},\Sigma_{0:H-1}^{(i+1)})+ \beta(O_{0:H-1}^{(i)},\Sigma_{0:H-1}^{(i)})$
			\STATE {$i \leftarrow i+1$}
			\ENDWHILE 
			\STATE $u_{MPC}${$\leftarrow$}$\{(u_{0}^{(N-1)})_j\}_{j=0}$.
			\STATE $u_k$ $\leftarrow$ MIControlSheme$(x_k, u_{MPC})$ in \textbf{Algorithm \ref{alg:Framwork_2}}.
			\STATE $x_{k+1}$ $\leftarrow$ Apply $u_k$ to the system (\ref{eq:system_model}).
			\STATE Collect data point $\{x_k,u_k,x_{k+1}\}$ and Update $GP$ in (\ref{mean}) and (\ref{var}).
			\STATE Reinitialize sampling distribution $O_{0:H-2}^{(0)}\gets O_{1:H}^{(N-1)}$, 
			$\Sigma_{0:H-2}^{(0)} \gets \Sigma_{1:H}^{(N-1)}$.
			\ENDWHILE 
		\end{algorithmic}
	\end{algorithm}

	It is hard to obtain a closed-form solution or apply a gradient-based optimizer to the optimization problem (\ref{eq:opt1})-(\ref{eq:opt3}), since the dynamics (\ref{eq:opt2}) is {nonlinear}, and the objective function (\ref{eq:opt1}) is {non-differentiable}. The CEM is adopted as an adaptive sampling-based method to solve the MPC, which is widely applied as a general optimization framework to solve complex optimization problems\cite{finn2017deep}\cite{kobilarov2012cross}\cite{chua2018deep}. It treats the optimization problem as an estimation problem of the probability of a rare event, with the distribution parameters to be estimated. The control space is sampled repeatedly and the sampling distributions of the controls are optimized via the importance sampling technique.

	To ease the understanding, the optimization process with CEM for (\ref{eq:opt1})-(\ref{eq:opt3}) is described in discrete control. Denote the time variable $(t+k\cdot ${$T_s$}$)$ with the time step subscript $k$. At the $i$-th CEM iteration, multiple $M$ random control sequences with $H$ time steps are sampled
	\begin{equation}
	\label{eq:sequences}
	\{(u_0^{(i)},...,u_{H-1}^{(i)})_j\}_{j=0}^{M-1}\sim\mathcal{N}(O_{0:H-1}^{(i)}, \Sigma_{0:H-1}^{(i)}),
	\end{equation}
	where $\mathcal{N}(O_{0:H}^{(i)}, \Sigma_{0:H}^{(i)})$ denotes $H$ separate multivariate Gaussian distributions, from which control input sequences are sampled. The mean and covariance matrix of the distributions are set initially $O_{0:H}^{(0)}=\{0\}_{0:H}$ and $\Sigma_{0:H}^{(0)}=\{\Sigma_{init}\}_{0:H}$, respectively, where $\Sigma_{init}=(\frac{u_{max}-u_{min}}{2})^2$. With these control sequences, the accumulated costs $\mathcal{J}_j^{(i)}$ of the $j$-th control sequences are evaluated based on the cost function $\mathcal{L}$ (\ref{eq:cost_function}): 
	\vspace{-3mm}
	\begin{equation}
	\mathcal{J}_j^{(i)}=\Phi(x_H) + \sum_{k=0}^{H-1}\mathcal{L}(x_k,u_k), \forall \ j=0,...,M-1. 
	\label{eq:score_function}
	\end{equation}
	With the estimated costs $\{J_j^{(i)}\}_{j=0}^{M-1}$, an elite set of $K$ control sequences with the $K$ lowest costs are chosen out from the $M$ sequences (\ref{eq:sequences}). The elite set is used to update the sampling distribution $\mathcal{N}(O_{0:H}^{(i+1)}, \Sigma_{0:H}^{(i+1)})$ for the next $i+1$ iteration: 
	\begin{alignat}{2}
	O_{0:H}^{(i+1)} &\gets (1-\beta)Mean(\{(u_0^{(i)},...,u_{H-1}^{(i)})_j\}_{j=0}^{K-1}) + \beta 	O_{0:H}^{(i)}, \\
	\Sigma_{0:H}^{(i+1)} &\gets (1-\beta) Var(\{(u_0^{(i)},...,u_{H-1}^{(i)})_j\}_{j=0}^{K-1}) + \beta 	\Sigma_{0:H}^{(i)}.
	\end{alignat}
	where $\beta$ is a smoothness coefficient to adjust the update of the distribution parameters. {The smaller it is, the more we trust the new distribution parameters. We usually set its value by experience and test different values in practice.}
	
	The sampling distribution is updated towards the regions with lower cost, as the above process iterates for $N$ times or the maximum variance drops below a minimum variance bound $\Sigma_{min}$. The remaining control sequence $\{(u_0^{(N)},...,u_{H-1}^{(N)})_j\}_{j=0}$  is returned with the lowest cost and the first-step control input $u_{MPC}=\{(u_{0}^{(N)})_j\}_{j=0}$ is applied as the output of the CEMPC. Modified by the auxiliary controller when necessary, the control input is applied to the system. The data of the system evolution is then collected in the dataset to update the GPs. \textbf{Algorithm 1} details the proposed learning-based CEMPC. 
	\begin{remark}
		Note that the prediction of mean in (\ref{mean}) takes $O(n^2H+MnH)$ computational complexity when predicting $M$ samples over H timesteps in (\ref{eq:sequences}).
		The calculation of the cost function in (\ref{eq:score_function}) is applied for $M$ samples and $H$ timesteps for each sample, which takes $O(MH)$ complexity.
	\end{remark}
	\subsection{Minimal Intervention (MI) Auxiliary Controller}
	\label{subsection:MI}
	In the learning-based CEMPC, task requirements and state constraints can be encoded in the cost functions with different penalty weights. Though the design process is simplified in this way, system safety regarding state constraints is not guaranteed. Besides, the sampling distribution should be guided to the low-cost regions in the case of unexpected disturbances. To solve the problem, an auxiliary controller is designed in this part to minimally intervene {in} the CEMPC.

	\subsubsection{Barrier-enforced Safety Scheme}
	\label{section:CBF}
	
	Safety under environmental uncertainties should be considered carefully before the optimized control inputs are applied to the system. It can be enforced with safety constraints via CBF, which can quantify the system safety regarding state constraints~\cite{ames2019control}.
	
	The $\mathit{safety\ set}\ \mathcal{S}$ of the system (\ref{eq:system_model}) can be defined by
	\begin{equation}
	\mathcal{S}:=\{x\in\mathcal{X}|h(x)\geq0\},
	\label{safety set}
	\end{equation}
	where $h:\mathbb{R}^{n}\rightarrow\mathbb{R}$ is a continuously differentiable function related to the state constraints.
	
	\begin{definition}
		The set $\mathcal{S}$ is called $\mathit{forward\ invariant}$, if for every $x_{0}\in\mathcal{S}$, $x(t,x_{0})\in\mathcal{S}$ for all $t\in \mathbb{R}_{0}^{+}$.
	\end{definition}
	To ensure forward invariance of $\mathcal{S}$, e.g. quadrotors stay in the collision-free safety set at all times, we consider the following definition.
	
	\begin{definition}(Definition 5 of \cite{ames2017control})\label{cbf_definition} 
		For the dynamical system~(\ref{eq:system_model}), given a set $\mathcal{S}\subset{\mathbb{R}}^{n}$ defined by~(\ref{safety set}) for a continuously differentiable function $h:\mathbb{R}^{n}\rightarrow\mathbb{R}$, the function $h$ is called a $\mathit{Zeroing \ Control\  Barrier \  Function}$ $(ZCBF)$ defined on the set $\mathcal{E}$ with $\mathcal{S} \subseteq \mathcal{E} \subset{\mathbb{R}}^{n}$, if there exists an extended class $\mathcal{K}$ function $\kappa$ ($\kappa(0) = 0$ and strictly increasing) such that
		\begin{equation}
		\mathop{\sup}_{u \in \mathcal{U}}[L_{f}h(x) + L_{g}h(x)u + \kappa (h(x))]\geq0, \forall x\in\mathcal{E},
		\end{equation}
		where $L$ represents the Lie derivatives. 
	\end{definition}
	To be more specific,
	\vspace{-0.2cm}
	\begin{equation}
	L_{f}h(x)=\frac{\partial h(x)}{\partial x}f(x),\ L_{g}h(x)=\frac{\partial h(x)}{\partial x}g(x).
	\label{lie d}
	\end{equation}

	ZCBF is a special control barrier function that comes with asymptotic stability~\cite{xu2015robustness}. The existence of a ZCBF implies the asymptotic stability and forward invariance of $\mathcal{S}$ as proved in~\cite{xu2015robustness}.

	Based on the ZCBF defined in \noindent\textbf{Definition 2}, a safety barrier for safety-critical systems can be constructed. Concretely, we aim to design a safety barrier for the uncertain system (\ref{eq:system_model}) to keep the state $x$ in the forward invariant safety set $\mathcal{S}$, which requires to hold $\dot{h}(x) \geq -\kappa(h(x))$. 
	
	With the learned disturbances $\widetilde d(x)$ via IGPs and the high confidence interval $\mathcal{D}$ (\ref{high confidence interval}) for the uncertain dynamical system (\ref{eq:system_model}), the following safe control space $K_{rzbf}$ is formulated as shown in our previous work\cite{zheng2020learning}
	\vspace{-0.25\baselineskip}
	\begin{equation}
	K_{rzbf}(x) = \{u\in\mathcal{U}| 
	\mathop{\inf}_{d \in \mathcal{D}(x)}[\dot{h}(x) +\kappa (h(x))]\geq 0\}.
	\label{k_rzbf}
	\vspace{-0.5\baselineskip}
	\end{equation}
	where $h(x)$ is a ZCBF, $\dot{h}(x) = \frac{\partial h(x)}{\partial x}\dot{x}=L_{\hat{f}}h(x) + L_{g}h(x)u + L_{\widetilde{d}}h(x)$, where the $L_{\hat{f}}h(x)$ and $L_{\widetilde{d}}h(x)$ denotes the Lie derivative of $h$ with respect to the known nominal model $\hat{f}$ of $f$ and the learned disturbances $\widetilde d(x)$, respectively. 
	\begin{lemma} 
		Given a set $\mathcal{S}\subset \mathbb{R}^{n}$ defined by (\ref{safety set}) with an associated ZCBF $h(x)$, the control input $u \in K_{rzbf}$ has a probability of at least $(1-\delta)$, $\delta \in (0,1)$, to guarantee the forward invariance of the set $\mathcal{S}$ for the uncertain dynamical system (\ref{eq:system_model})
	\end{lemma}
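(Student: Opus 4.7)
The plan is to chain together three ingredients that are already available: (i) the high-probability confidence bound on the GP estimate given in~(\ref{high confidence interval}), (ii) the worst-case design of the safe control set $K_{rzbf}$ in~(\ref{k_rzbf}), and (iii) the standard ZCBF forward-invariance result recalled from~\cite{xu2015robustness, ames2016control}. The overall idea is that whenever the true disturbance realization falls inside the GP confidence tube $\mathcal{D}(x)$, any $u\in K_{rzbf}(x)$ will render $\dot h(x)+\kappa(h(x))\ge 0$ for the \emph{true} closed-loop dynamics, which by the ZCBF theorem forces $\mathcal{S}$ to be forward invariant. The only source of failure is the low-probability event $d(x)\notin\mathcal{D}(x)$, and the construction of $c_\delta$ in~(\ref{high confidence interval}) bounds this by $\delta$.

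I would carry the argument out in four steps. \emph{Step 1:} Fix a state $x\in\mathcal S$ and let $u\in K_{rzbf}(x)$. By definition of $K_{rzbf}$ in~(\ref{k_rzbf}),
\begin{equation}
L_{\hat f}h(x)+L_g h(x)u+\inf_{d\in\mathcal D(x)} L_d h(x)+\kappa(h(x))\ge 0,
\end{equation}
where I use that $L$ is linear in its vector field argument so the infimum passes through the other (non-$d$) terms. \emph{Step 2:} Invoke~(\ref{high confidence interval}) to assert that $P\bigl(d(x)\in\mathcal D(x)\bigr)\ge 1-\delta$; conditional on this event, $L_{d}h(x)\ge \inf_{d'\in\mathcal D(x)} L_{d'}h(x)$, so combining with Step~1 and the decomposition of $\dot h$ noted right after~(\ref{k_rzbf}) yields $\dot h(x)+\kappa(h(x))\ge 0$ along the true dynamics~(\ref{eq:system_model}). \emph{Step 3:} Because $h$ is a ZCBF with extended class-$\mathcal K$ function $\kappa$, the standard ZCBF theorem (Definition~\ref{cbf_definition} and~\cite{xu2015robustness}) states that any locally Lipschitz controller satisfying $\dot h(x)+\kappa(h(x))\ge 0$ on $\mathcal E\supseteq\mathcal S$ renders $\mathcal S$ forward invariant. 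Applying this to the closed-loop system on the event in Step~2 gives forward invariance of $\mathcal{S}$. \emph{Step 4:} Combine the probability bound of Step~2 with the deterministic implication of Step~3 to conclude that the desired forward-invariance property holds with probability at least $1-\delta$.

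The main obstacle is the passage from a \emph{pointwise-in-$x$} high-probability bound to a statement about forward invariance along an entire trajectory: in principle one needs $d(x(t))\in\mathcal D(x(t))$ to hold simultaneously for all $t\ge 0$, which is stronger than a per-query guarantee. I would handle this by adopting the same interpretation used in~\cite{Berkenkamp2016SafeLO, zheng2020learning}, namely either (a) treating~(\ref{high confidence interval}) as a uniform RKHS confidence bound (via the $\beta_t$-scaling of~\cite{srinivas2012information}), so that $c_\delta$ absorbs the union bound and the event $\{d(x)\in\mathcal D(x)\ \forall x\}$ has probability at least $1-\delta$, or (b) stating the guarantee in the per-step sense consistent with how the controller is queried in Algorithm~\ref{alg:Framwork_1}. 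Either interpretation closes the argument; the Lipschitz assumption on $G$ in Section~\ref{section:problem_statement} and the continuous differentiability of $h$ and $\kappa$ ensure the regularity needed to invoke the ZCBF theorem. No other routine calculations are required beyond those already recorded in~(\ref{lie d})--(\ref{k_rzbf}).
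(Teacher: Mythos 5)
Your proposal is correct and follows essentially the same route as the paper: the paper's proof likewise asserts that with probability at least $(1-\delta)$ the disturbance satisfies $d(x)\in\mathcal{D}(x)$ for all $x\in\mathcal{X}$, deduces from the worst-case constraint defining $K_{rzbf}$ that $\dot h(x)+\kappa(h(x))\ge 0$ on that event, and then invokes the standard ZCBF forward-invariance theorem of Ames et al. Your Step~4 discussion of the pointwise-versus-uniform confidence issue is in fact more careful than the paper, which simply takes the bound in~(\ref{high confidence interval}) to hold uniformly over $\mathcal{X}$ following~\cite{Berkenkamp2016SafeLO}.
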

	
	\begin{proof}
		From (\ref{high confidence interval}), there is a probability of at least $(1-\delta)$ such that the bounded model uncertainty $d(x) \in \mathcal{D}(x)$ for all $x\in\mathcal{X}$. Since the control input $u\in\mathcal{U}$ of the safe control space $K_{rzbf}$ satisfies the constraint in (\ref{k_rzbf}), the following result holds with a probability of at least $(1-\delta)$:	\vspace{-0.5\baselineskip}
		\begin{equation}
		\dot h(x) + \kappa(h(x)) \geq 0, \forall x\in\mathcal{S}.
		\vspace{-0.5\baselineskip}
		\end{equation}
		
		As a result, the control input $u\in\mathcal{U}$ of the safe control space $K_{rzbf}$ has a probability of at least $(1-\delta)$ 
		to guarantee the forward invariance of the set $\mathcal{S}$ for the uncertain dynamical system~(\ref{eq:system_model}) as proven in~\cite{ames2017control}.
	\end{proof}
	
	For convenience, with the estimated high confidence interval $\mathcal{D}$ (\ref{high confidence interval}) via GPs, the constraint in (\ref{k_rzbf}) can be equivalently expressed as
	\vspace{-2.5mm}
	\begin{equation}
	L_{\hat{f}}h(x) + L_{g}h(x)u + L_{\mu}h(x) -c_{\delta}|L_{\sigma}h(x)|  \geq  -\kappa(h(x)),
	\label{rcbf constraints}
	\vspace{-1.0mm}
	\end{equation}
	where $L_{\mu}h(x)$ and $L_{\sigma}h(x)$ denote the Lie derivatives of $h(x)$ with respect to $\mu$ and $\sigma$, respectively.
	
	\begin{remark}
		Note that with more informative data collected, the bounded uncertainty $\sigma$ (\ref{var}) will gradually decrease. The proof of such conclusion can be obtained using the partitioned matrix equations, {see Appendix A}. An alternative information theoretic argument is given in \cite{srinivas2012information}\cite{williams2000upper}. Thus, the probability rendering $\mathcal{S}$ forward invariant is much higher than $(1-\delta)$ in most cases.
	\end{remark}
	\subsubsection{Sampling Guidance Scheme}
	In this section, we develop a sampling guidance scheme for CEMPC under uncertain disturbances based on the stability property. Sampling distribution could easily deviate from the low-cost regions in the case of external disturbances. Based on a deviated distribution, the optimization may get stuck in the local minimums. The Lyapunov methods have proven to be an efficient way to improve sampling performance for nonlinear systems in model-based RL \cite{Berkenkamp2016SafeLO,Berkenkamp2017SafeMR}. To guide the sampling distribution of CEMPC to the low-cost region in the case of uncertain disturbances, the stability constraints with regard to the main task, corresponding to the first term in the cost function (\ref{eq:cost_function}), can be constructed based on the CLF $V(x)$:	$\dot{V}(x)\leq -\alpha V(x)$, $\alpha>0$ \cite{khalil2002nonlinear}. 
	
	With learned disturbances $\widetilde d$ estimated by the IGPs, CLF can be utilized to construct a stability control set $K_{rclf}$ \cite{zheng2020learning}
	
	\begin{equation}
	K_{rclf}(x) = \{u\in\mathcal{U}| 
	\mathop{\sup}_{d \in \mathcal{D}(x)}[\dot{V}(x) + \alpha V(x)]\leq 0\},
	\label{prclf}
	\end{equation}
	where $\dot{V}(x) = \frac{\partial V(x)}{\partial x}\dot{x}=L_{\hat{f}}V(x) + L_{g}V(x)u + L_{\widetilde{d}}V(x)$ and $\alpha >0$.
	
	With the estimated high confidence interval $\mathcal{D}$ (\ref{high confidence interval}) via GPs, the constraint in (\ref{prclf}) can be simplified as:
	\vspace{-0.15cm}
	\begin{equation}
	L_{\hat{f}}V(x) + L_{g}V(x)u + L_{\mu}V(x)+c_{\delta}|L_{\sigma}V(x)| \leq - \alpha V(x),
	\label{clf constraints}
	\vspace{-0.3\baselineskip}
	\end{equation}
	where $L_{\mu}V(x)$ and $L_{\sigma}V(x)$ denote the Lie derivatives of $V(x)$ with respect to $\mu$ and $\sigma$, respectively.
	
	\subsubsection{Auxiliary Controller}
	
	{Considering safety constraint (\ref{rcbf constraints}) and stability constraint (\ref{clf constraints}) with slack variables directly in the LB-CEMPC controller, one needs to compute the variance of the predicted states and propagate the variances along the prediction horizon. As a result, it can lead to a heavy computation burden.}
	
	{We design a MI auxiliary controller to integrate constraints (\ref{rcbf constraints}) and (\ref{clf constraints}) to minimally modify the outputs of the MPC and formally satisfy these constraints. The proposed control scheme decouples the requirements of predictive and adaptive control performance in a way of minimal intervention. The safety requirements are considered as soft constraints in the non-differentiable objective function of LB-CEMPC, while they are enforced in a form of CBF in the MI controller. The MI auxiliary controller considers the uncertainty of the disturbance and forms an efficient QP controller to minimally modify the control outputs.}
	
	{With conditions (\ref{rcbf constraints}) and (\ref{clf constraints}),} a QP (\ref{eq:qp_opt1})-(\ref{eq:qp_opt4}) can be constructed to modify the control inputs $u_{MPC}$ of the learning-based CEMPC in a way of minimal intervention. The nonlinear system under environmental uncertainties can be guaranteed safe and guided to a low-cost region of the main task in a high probability by solving the following QP. \vspace{-2mm}
	\begin{alignat}{2}
	\label{eq:qp_opt1}
	u^{*}(x)=&\mathop{\arg\min}_{(u,\varepsilon,\eta) \in {\mathbb{R}^{m+1}}} ||u-u_{MPC}||{^2} + \lambda_{\epsilon}\epsilon^2 +\lambda_{\eta}\eta^2\\
	\mbox{s.t.} \quad
	&A_{cbf}u+b_{cbf} \leq \varepsilon,\label{eq:qp_opt2} \\
	&A_{clf}u+b_{clf} \leq \eta,  \label{eq:qp_opt3} \\
	&u_{min} \leq u \leq u_{max},\label{eq:qp_opt4} 
	\end{alignat}
	where $u_{min}, u_{max}\in\mathcal{U}$ are the lower and upper bound of the control inputs, respectively. $\lambda_{\varepsilon}, \lambda_{\eta}\in\mathbb{R^{+}}$ are penalty coefficients of the slack variables $\varepsilon\in\mathbb{R}$ and $\eta\in\mathbb{R}$, respectively. $A_{cbf} =-L_{g}h(x)$, $b_{cbf}= -L_{f}h(x) - L_{\mu}h(x)+c_{\delta}|L_{\sigma}h(x)| - \kappa  (h(x))$, $A_{clf} =L_{g}V(x)$, $b_{clf}= L_{f}V(x) + L_{\mu}V(x)+c_{\delta}|L_{\sigma}V(x)| - \alpha V(x)$. The feasibility of the QP (\ref{eq:qp_opt1})-(\ref{eq:qp_opt4}) can be ensured with the slack variables, while the violations of safety constraints can be heavily penalized as long as the corresponding coefficients are large enough. 
	
	The designed MI auxiliary control scheme is shown in \textbf{Algorithm \ref{alg:Framwork_2}}. We can always directly apply the control outputs $u_{MPC}$ of the learning-based CEMPC (\textbf{Algorithm \ref{alg:Framwork_1}}, line 10) if the $u_{MPC}$ satisfies the constraints (\ref{rcbf constraints}) and (\ref{clf constraints}). Otherwise, it is modified by solving the QP (\ref{eq:qp_opt1})-(\ref{eq:qp_opt4}).
	\begin{algorithm}[htb] 
		\caption{MI Auxiliary Control Scheme} 
		\label{alg:Framwork_2} 
		\begin{algorithmic}[1] 
			\REQUIRE ~~\\ 
			$x_k$: Current state, \\
			$u_{MPC}$: Control inputs from learning-based CEMPC.			
			\STATE $u_k$ $\leftarrow$ $u_{MPC}$
			\IF{(\ref{rcbf constraints}) and (\ref{clf constraints}) Infeasible } 
			\STATE  $u_{k}$ $\leftarrow$ Solve QP (\ref{eq:qp_opt1})-(\ref{eq:qp_opt4}) with $x_k$ and $u_{MPC}$
			\ENDIF 
			\label{code:recentStart}
			\label{code:fram:select}
		\end{algorithmic}
	\end{algorithm}
	
	
	\begin{remark}
		Note that the optimization~(\ref{eq:qp_opt1}) is not sensitive to the parameters $\lambda_{\varepsilon}$ and $\lambda_{\eta}$.
		The violation of the safety (\ref{rcbf constraints}) and stability constraints (\ref{clf constraints}) can be heavily penalized as long as the $\lambda_{\varepsilon}$ and $\lambda_{\eta}$ are large enough (e.g. $\lambda_{\varepsilon}=10^{30}$,$\lambda_{\eta}=10^{20}$). Besides, $\lambda_{\varepsilon}$ can be set extremely larger than $\lambda_{\eta}$ to make the safety constraints much stricter.
	\end{remark}
	
	\section{Simulation Studies}
	\vspace{-.04cm}
	\label{section:experiment}
	In this section, the proposed control architecture is verified on a task of simultaneous trajectory tracking and obstacle avoidance of a quadrotor. For a safety-critical quadrotor with a limited sensing range, avoiding an uncertain number of detected obstacles around or on the trajectory can be conveniently encoded as some {non-differentiable} running-cost terms into the objective function. It is difficult to describe such a complex task and trade off the tracking and safety well with a simplified differentiable objective function. Besides, a quadrotor is prone to uncertain wind disturbances, which are hard to be accurately modeled. Uncertain wind disturbances not only pose a critical challenge to achieve accurate tracking, but also may cause the quadrotor to collide in a cluttered obstacle field. 
	
	\subsection{Quadrotor Dynamics and Control}
	The quadrotor is a well-modeled dynamical system with torques and forces generated by four rotors and gravity. The Euler angles (roll $\phi$, pitch $\theta$, and yaw $\psi$) are defined with the ZYX convention. The attitude rotation matrix $R\in SO(3)$ from the body frame $\mathcal{B}$ to the global frame $\mathcal{W}$ can be written as~\cite{hehn2015real}:\
	\begin{equation}
	R=\left[\begin{matrix}c\theta c\psi&s\phi s\theta c\psi-c\phi s\psi&c\phi s\theta c\psi+s\phi s\psi\\c\theta s\psi&s\phi s\theta s\psi+c\phi c\psi&c\phi s\theta s\psi-s\phi c\psi\\-s\theta&s\phi c\theta&c\phi c\theta\\\end{matrix}\right],
	\end{equation}
	where $s$ and $c$ denote $sin$ and $cos$, respectively.
	
	
	The nonlinear quadrotor system can be modeled as following \cite{shi2019neural}:\vspace{-0.2cm}
	\begin{align}
	&\dot{p} =v, \label{Za}\\
	& m{\dot{v}} = mge_3 + Rf_u+d_w, \label{Zb}\\
	&\dot{R} = RS(\omega)\label{Zc},\vspace{-0.3cm}
	\end{align}
	where $m$ and $g$ denote the mass and the gravity acceleration, respectively. $e_3=[0,0,1]^T$ is the unit vector, $p=[p_x,p_y,p_z]^{T}$ and $v=[v_x,v_y,v_z]^{T}$ denote translational position and velocity in $\mathcal{W}$, respectively. $f_u=[0,0,f_T]^T$ with $f_T$ the total thrust generated from the four rotors in $\mathcal{B}$, and {$S({}\cdot)$} is skew-symmetric mapping. The uncertain wind disturbances acting on the quadrotor dynamics is represented as $d_w =K_{drag}(v_w-v)$, where $v_w \in \mathbb{R}^3$ is the velocity of wind disturbances in $\mathcal{W}$ and $K_{drag} \in \mathbb{R}^{3 \times 3}$ is the drag coefficient diagonal matrix. We define in the dynamics equation (\ref{eq:system_model}) the state $x=[p_x, p_y, p_z, v_x, v_y, v_z, \phi, \theta, \psi]^{T}$ and the control input $u = [f_T, \omega^{T}]^T$, where $\omega = [\omega_{x}, \omega_{y}, \omega_{z}]^{T}$ is the the body rotational rates. It is assumed that the body rotational rates are directly controllable through the fast response onboard controller of commercial quadrotors. 
	
	For the task of simultaneous trajectory tracking and obstacle avoidance, the cost function $\mathcal{L}$ in the objective function (\ref{eq:score_function}) can be defined as   \vspace{-0.1cm}
	\begin{equation}
	\mathcal{L}(x) = \iota(x-x_d)^TQ\iota(x-x_d) + \sum_{i}^{N_0}w_{i}\frac{\mathbb{I}_{C_i}}{r_i}
	\label{cost_func}\vspace{-0.3cm}
	\end{equation}
	\begin{equation}
	C_i = \{x|\ ||r_i||<0.8\}, 
	\end{equation}
	where $\iota={\operatorname{diag}(1,1,1,1,1,1,0,0,0)}$ extracts the position and velocity states from the state $x$, the weight coefficient matrix $Q={\operatorname{diag}(8.5,8.5,8.5,1.5,1.5,1.5)}$, $N_0$ denotes the number of the detected obstacles, and the weight coefficient $w_{i}$ is a positive constant, $\forall \ i=1,...,N_0$.  $x_d=[p_d^{T},{\dot{p}_d}^{T},\phi_d,\theta_d,\psi_d]^{T}$ is the desired state, where $\phi_d,\theta_d,\psi_d \in \mathbb{R}$ are the desired attitudes, $r_i$ is the shortest Euler distance from the quadrotor to the $i$-th detected obstacle, and $\mathbb{I}_{C_i}$ is an indicator function that will be turned on if the state {is} in the set $C_i$. 
	
	\begin{remark}
		Note that the second term in (\ref{cost_func}) is designed to show the predictivity inherent in the CEMPC for obstacle avoidance. The trade-off between the safety and tracking performance can be adjusted by the weight $w_{i}$ in (\ref{cost_func}).
	\end{remark}
	\begin{remark}
		Note that designing a differentiable cost function for this task would be nontrivial! Besides, composing a cost function with different nonlinear cost terms may lead to local minima, which brings difficulties to the optimization. In this case, specifying a {non-differentiable} cost function with several interpretable terms and different weights can be convenient according to the importance of task requirements.

	\end{remark}
	\subsection{Simulation setup}
	
\begin{table}[]
\centering
\caption{Parameter Table For Simulation Setup}
\label{tab:param}
\begin{tabular}{l|l}
\hline
{Parameter}                                                  & {Value}                           \\ \hline
{The mass of quadrotor $m$}                                  & {$0.08kg$}                        \\
{The arm length from the center of mass to each motor}       & {$0.11m$}                         \\
{The maximum total thrust $f_{T_{max}}$}                     & {$1.3N$}                          \\
{The maximum body rotational rates $\omega_{max}$}           & {$[3.49,3.49,5.24]^\intercal$}    \\
{The minimum body rotational rates $\omega_{min}$}           & {$[-3.49,-3.49,-5.24]^\intercal$} \\
{The detection range for the obstacle}                       & {$2m$}                            \\
{The simulation time}                                        & {$20s$}                           \\
{The control frequency}                                      & {$50HZ$}                          \\
{The hyperparameters $L$ of GP kernel}                       & {$1$}                             \\
{The hyperparameters $\sigma_f$ of GP kernel}                & {$1$}                             \\
{The max size of IGP dataset $n$}                            & {$20$}                            \\
{The confidence parameter $c_{\delta}$}                      & {$3$}                             \\
{The predictive time steps $H$ of CEMPC per iteration}      & {$20$}                            \\
{The number of the samples $M$ for CEMPC per iteration}      & {$100$}                           \\
{The size of the elite set $K$ for CEMPC per iteration}      & {$10$}                            \\
{The number of iterations $N$ of CEMPC}                      & {$5$}                             \\
{The minimum variance bound for optimization $\Sigma_{min}$} & {$0.001$}                         \\
{The update rate $\beta$}                                    & {$0.25$}                          \\
{The slack variable $\lambda_{\varepsilon}$ in the QP}       & {$10^{30}$}                       \\
{The slack variable $\lambda_{\eta}$ in the QP}              & {$10^{20}$}                       \\ \hline
\end{tabular}
\end{table}
	
	A simulation platform is created using Python 3.6 on an Intel Xeon X5675 CPU with 3.07 GHz clock frequency. A quadrotor model of Blade mQX quadrotor is used with the parameters set referred to \cite{cabecinhas2014globally}.
	{The main simulation parameter values are shown in Table \ref{tab:param}.}
	
	A wind model in \cite{cole2018reactive} is utilized to evaluate the algorithm performance. Wind velocity consists of a constant component $v_c$ and a turbulent component $v_t$, i.e.,  $v_w=v_c+v_t$. The turbulence wind uses the von Kármán velocity model defined analytically in the specification MIL-F-8785C \cite{moorhouse1980us}, with the specific low-altitude model for the model parameters. The drag coefficient diagonal matrix $K_{drag}={\operatorname{diag}(0.03,0.03,0.03)^T}$. Four magnitudes of constant wind components are used to validate the trajectory tracking performance, as shown in Table \ref{table:table_results}.

	Three GPs are built to estimate the effects of the unknown wind disturbances $d_w$. Each GP uses the same squared-exponential kernel.

	The quadrotor with a limited sensing range is required to track a reference trajectory while avoiding obstacles under varying wind disturbances in \textbf{{three} scenarios}. The initial state of the quadrotor is set as the initial position of the reference trajectory with zero velocity and attitude. 
	
	\noindent\textbf{Scenario 1}\noindent~is designed to validate the effectiveness of the proposed learning-based CEMPC method with the MI controller. The reference trajectory is given as a spiral curve $p_d(t)=[2sin(0.5t), 2-2cos(0.5t),0.2t]^{T}$ and $\psi_d(t) = 0$. It lies in a dense cluttered obstacle field under time-varying wind disturbances, where a moving obstacle flies along the reference trajectory to the quadrotor with a speed of $0.8 m/s$. The weight coefficients are set $w_{i} =10 , \forall \ i=1,...,N_0$ in (\ref{cost_func}).
	
	\noindent\textbf{Scenario 2}\noindent~is studied to further validate the safety and tracking performance trade-off in the design of the {non-differentiable} cost function (\ref{cost_func}), where the weight coefficient $w_{i}$ is set to different orders of magnitude, $\forall \ i=1,...,N_0$. The difference to \textbf{Scenario 1} is that there are only one static obstacle and one dynamic obstacle along the reference trajectory. It is devised to clearly show the proactive ability inherent in the CEMPC framework for obstacle avoidance. 
	
	\noindent{\textbf{Scenario 3}\noindent~is studied to illustrate the efficiency of LB-CEMPC with different prediction horizons and sample sizes. The reference trajectory is an unsmooth trajectory made of three straight lines $l_1=30~\mathrm{m}$ connected by two perpendicular straight lines $l_2=30~\mathrm{m}$. There is no obstacle in \textbf{Scenario 3} and the reference velocity is set $8m/s$. A trade-off between efficiency and tracking performance can be obtained from the simulation results of \textbf{Scenario 3}.}
	
	
	The barrier function $h$ can be constructed with the distance from the quadrotor to the obstacles within its sensing region. This distance can be obtained with the largest ellipsoidal region of obstacle-free space, which can be efficiently computed using the IRIS algorithm~\cite{deits2015computing} through semi-definite programming. Specifically, an ellipsoid can be represented as an image of the unit ball:
	\begin{equation}
	E(C, \zeta) =\{Co+\zeta\ |\ \Vert o \Vert=1\},
	\end{equation}
	where $o\in\mathbb{R}^{3\times 1}$, $o^{T}o=1$ denotes a unit ball, the mapping matrix $C\in\mathbb{R}^{3\times 3}$ and the offset vector $\zeta \in\mathbb{R}^{3\times 1}$ can be obtained using the IRIS algorithm. For the vector $\epsilon$ on the ellipsoid $\epsilon \in E$, we have $(\epsilon - \zeta)^{T}{C^{-1}}^{T} C^{-1}(\epsilon - \zeta)=1$. The CBF can be constructed to enforce the quadrotor to stay within the safety ellipsoid region as 
	\begin{equation}
	h(x) =1-(\iota_p x- \zeta)^{T}{C^{-1}}^{T} C^{-1}(\iota_p x - \zeta).
	\label{cbf}
	\end{equation}
	where $\iota_p={\operatorname{diag}(1,1,1,0,0,0,0,0,0)}$ extracts the position from the state $x$.
	
	For the main task of trajectory tracking, the CLF is designed as 
	\begin{equation}
	V(x)=(\iota_p x-p_d)^{T}Q_p(\iota_p x-p_d)+(\iota_v x-{\dot{p}_d})^{T}Q_v(\iota_v x-{\dot{p}_d}),
	\end{equation}
	where $Q_p={\operatorname{diag}(0.8,0.8,0.8)}$, $Q_v={\operatorname{diag}(0.2,0.2,0.2)}$ and  $\iota_v={\operatorname{diag}(0,0,0,1,1,1,0,0,0)}$ extracts the velocity from the state $x$. The extended $\mathcal {K}$ class function $\kappa$ is chosen as $\kappa(h(x)) = 10h(x)$, and the positive constant $\alpha = 0.1$.
	The QP is solved with CVXOPT solver~\cite{andersen2013cvxopt}. 
	
	\subsection{Results}
	\label{section:results}
	To validate the effectiveness of the proposed control scheme, an ablation study is conducted to assess that the proposed method is able to: 1) learn and adapt to the uncertain environmental disturbances, 2) handle the task with a {non-differentiable} objective function, 3) achieve safe control with a low tracking error, and 4) provide a way to trade off between safety and tracking performance. We compare the following four methods:
	
	\begin{itemize}
		\item \textbf{CEMPC}: A CEM-based MPC without the IGPs for learning the uncertain disturbances.
		\item \textbf{LB-CEMPC}: A learning-based CEMPC without the MI auxiliary controller.
		\item \textbf{LB-CEMPC-CBF}: The proposed learning-based CEMPC without the sampling guidance scheme {(\ref{clf constraints})} in the MI auxiliary controller.
		\item \textbf{LB-CEMPC-MI}: The proposed learning-based CEMPC with the designed MI auxiliary controller.
	\end{itemize}

	\begin{figure*}[t]
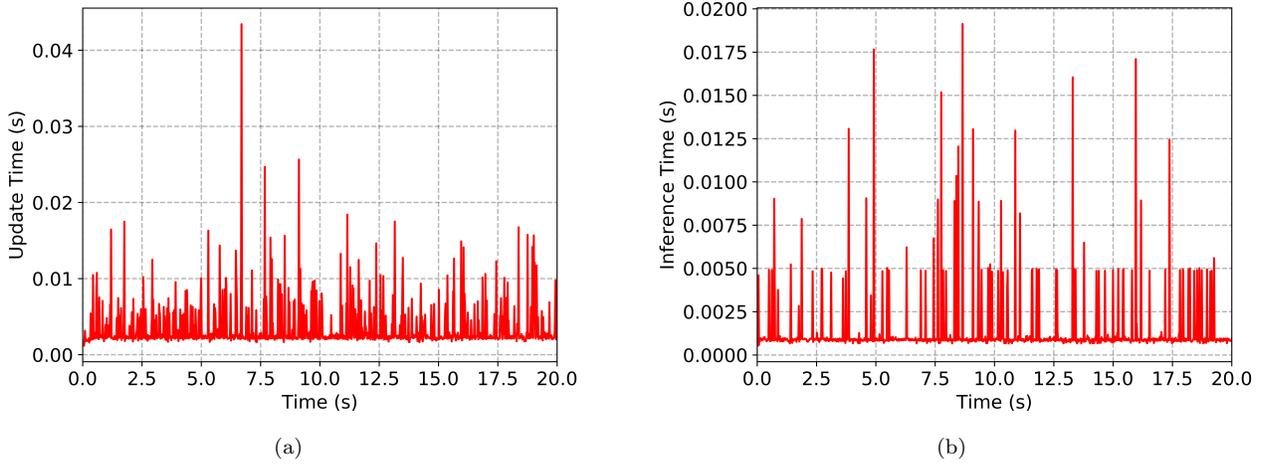

		\centering
		\subfigure[]{
			\label{fig:GP update}
			\includegraphics[scale=0.045]{gp_learn_time-eps-converted-to.pdf}
		}\hspace{5mm}
		\subfigure[]{
			\label{fig:GP inf}
			\includegraphics[scale=0.045]{gp_inf_time-eps-converted-to.pdf}
		}
		\caption{(a) The time of learning, and (b) the time of inference of the IGP under the Wind-4. }
		\label{fig:GP_wind-4}
	\end{figure*}

	\begin{figure}[t]
		\centering
		\includegraphics[scale=0.15]{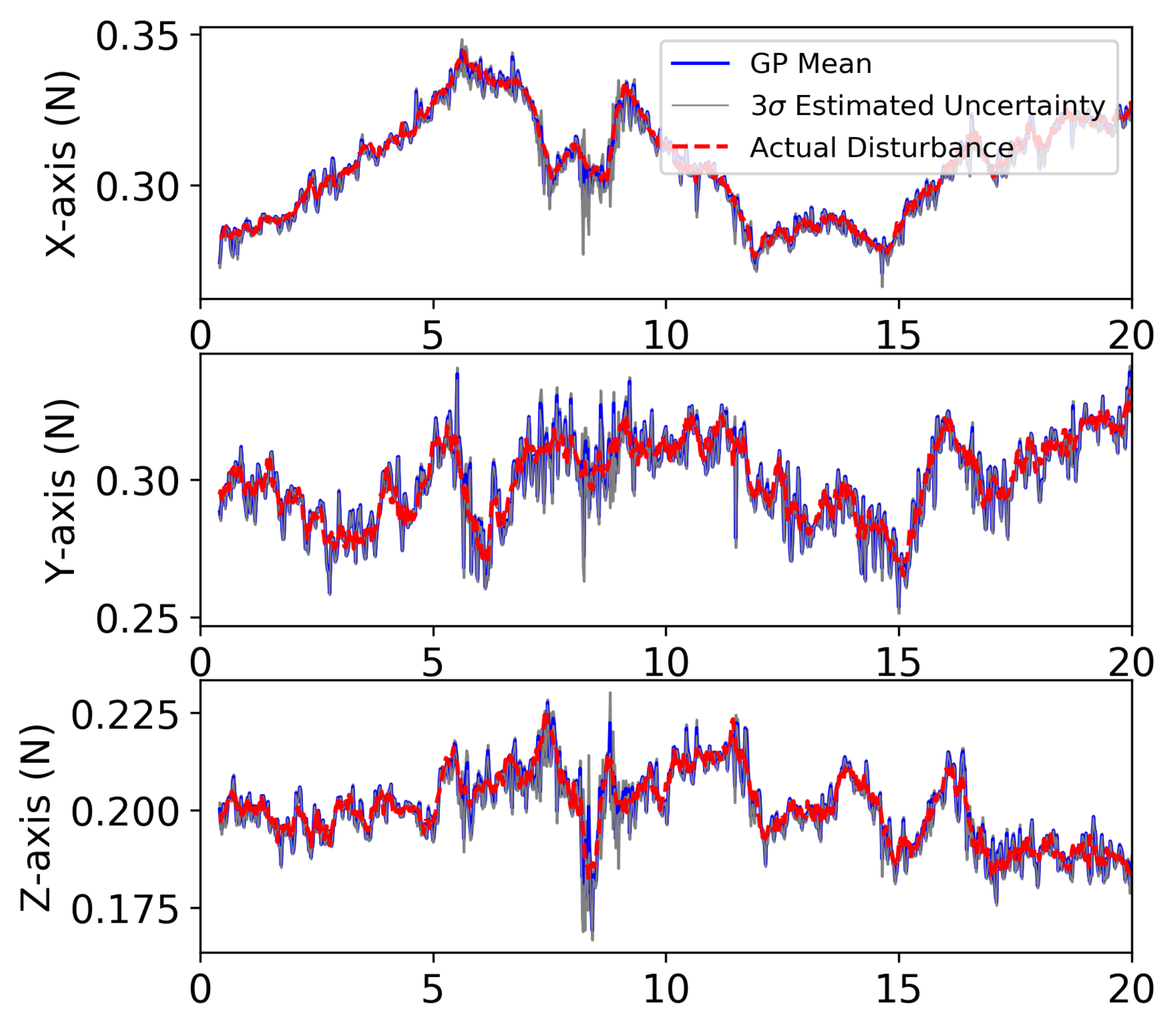}	\vspace{-0.2cm}
		\caption{{The wind disturbances estimated by IGPs in three axes on the spiral trajectory under the Wind-4.}}
		\label{fig:GP}
	\end{figure}
	
	\subsubsection{Learning Performance}
	Figures \ref{fig:GP update} and \ref{fig:GP inf} {show} the time of learning and inference with the IGP at each iteration under Wind-4. The learning time is less than $0.02s$ most of the time and the inference time keeps below $0.02s$. It shows that the IGP can be used as an online learning technique with fast learning and inference time. Note that the code in Python has not been optimized for speed and can be accelerated in a C++ implementation.

	The uncertain wind disturbances in three axes modeled by GPs are shown in Fig.~\ref{fig:GP}. It can be seen that the IGPs can estimate well the actual wind disturbances with turbulence. The actual disturbances lie within the uncertainty bounds of the estimations. 
	
	\subsubsection{Trajectory Tracking Performance}
	
	\begin{table}[htbp]
		\renewcommand{\arraystretch}{1.1}
		\scriptsize
		\caption{
			Statistics of RMS Errors (in meter) with Different Control Schemes and Configurations. }
		\label{table:table_results}
		\centering
		\begin{tabular}{ccc|cccc}
			\hline
			Prediction Horizon $T_h$ & High-level &  Scheme& 
			\begin{tabular}{@{}c@{}} Wind-1\\$v_c=5m/s$ \end{tabular}  & 
			\begin{tabular}{@{}c@{}}  Wind-2\\$v_c=8m/s$\end{tabular}  
			& 
			\begin{tabular}{@{}c@{}}  Wind-3\\$v_c=10m/s$ \end{tabular}  & 
			\begin{tabular}{@{}c@{}}  Wind-4\\$v_c=12m/s$\end{tabular}  \\
			\hline
			0.2 s            &	CEMPC            & ---           & 0.704 & 0.818 & 1.164  & 1.872 \\
			0.2 s          &	LBCEMPC            & ---          & 0.630 & 0.631 & 0.636  & 0.644\\
			0.2 s          &	LBCEMPC            &  CBF      & 0.632 & 0.638 & 0.640  & 0.636\\
			0.2 s           &	\textbf{LB-CEMPC}   & \textbf{MI}   & \textbf{0.354} & \textbf{0.195} & \textbf{0.246} & \textbf{0.388}  \\
			
			0.4 s           &	CEMPC            & ---           & 0.662 & 0.765 & 1.257  & 1.950 \\
			0.4 s            &	LBCEMPC            & ---         & 0.593 & 0.608& 0.601  & 0.636\\
			0.4 s          &	LBCEMPC            &  CBF      & 0.578 & 0.625 & 0.601  & 0.618\\
			0.4 s           &	\textbf{LB-CEMPC}   & \textbf{MI}   & \textbf{0.215} & \textbf{0.124} & \textbf{0.214} & \textbf{0.349} \\
			0.6 s           &	CEMPC            & ---           & 0.631 & 0.737 & 1.260  & 2.077\\
			0.6 s           &	LBCEMPC            & ---          & 0.582 & 0.566 & 0.596 & 0.579 \\
			0.6 s          &	LBCEMPC             & CBF & 0.556 & 0.583  & 0.604 & 0.625\\
			0.6 s          &	\textbf{LB-CEMPC}   & \textbf{MI}   & \textbf{0.170} & \textbf{0.178} & \textbf{0.271} & \textbf{0.341}  \\
			\hline
		\end{tabular}
	\end{table}

	\begin{figure}[t]
		\centering
		\includegraphics[scale=0.15]{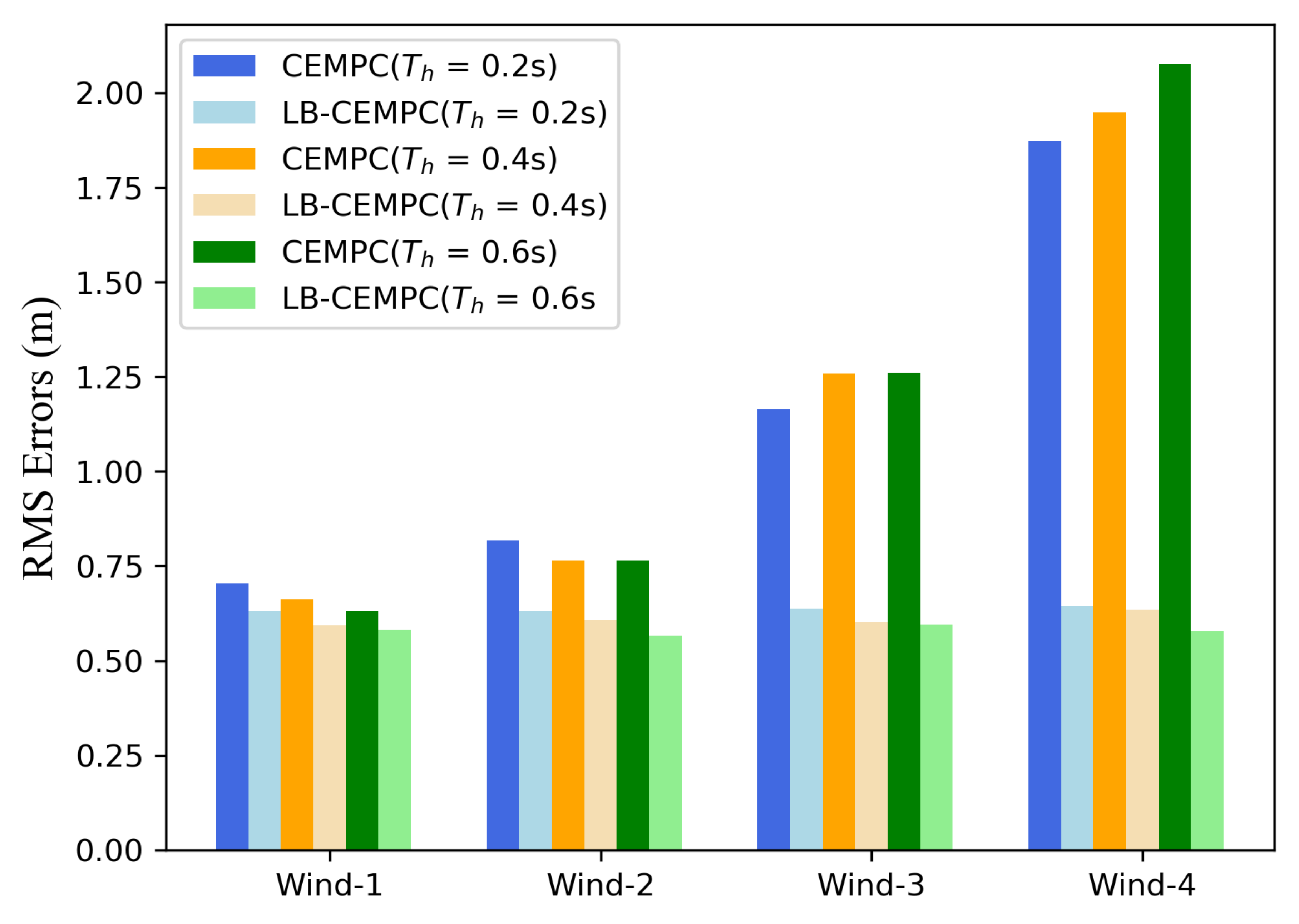}	\vspace{-2mm}
		\caption{The RMS errors of the quadrotor trajectory tracking using the LB-CEMPC and the CEMPC controllers with different prediction horizons under wind disturbances. The proposed LB-CEMPC outperforms the baseline CEMPC in four settings of time-varying wind disturbances.}
		\label{fig:mpcrmse}\vspace{3mm}
	\end{figure}

	\begin{figure*}[!ht]
		\centering
		\subfigure[Wind-1.]{
			\label{fig:MPC_error_wind1}
			\includegraphics[scale=0.045]{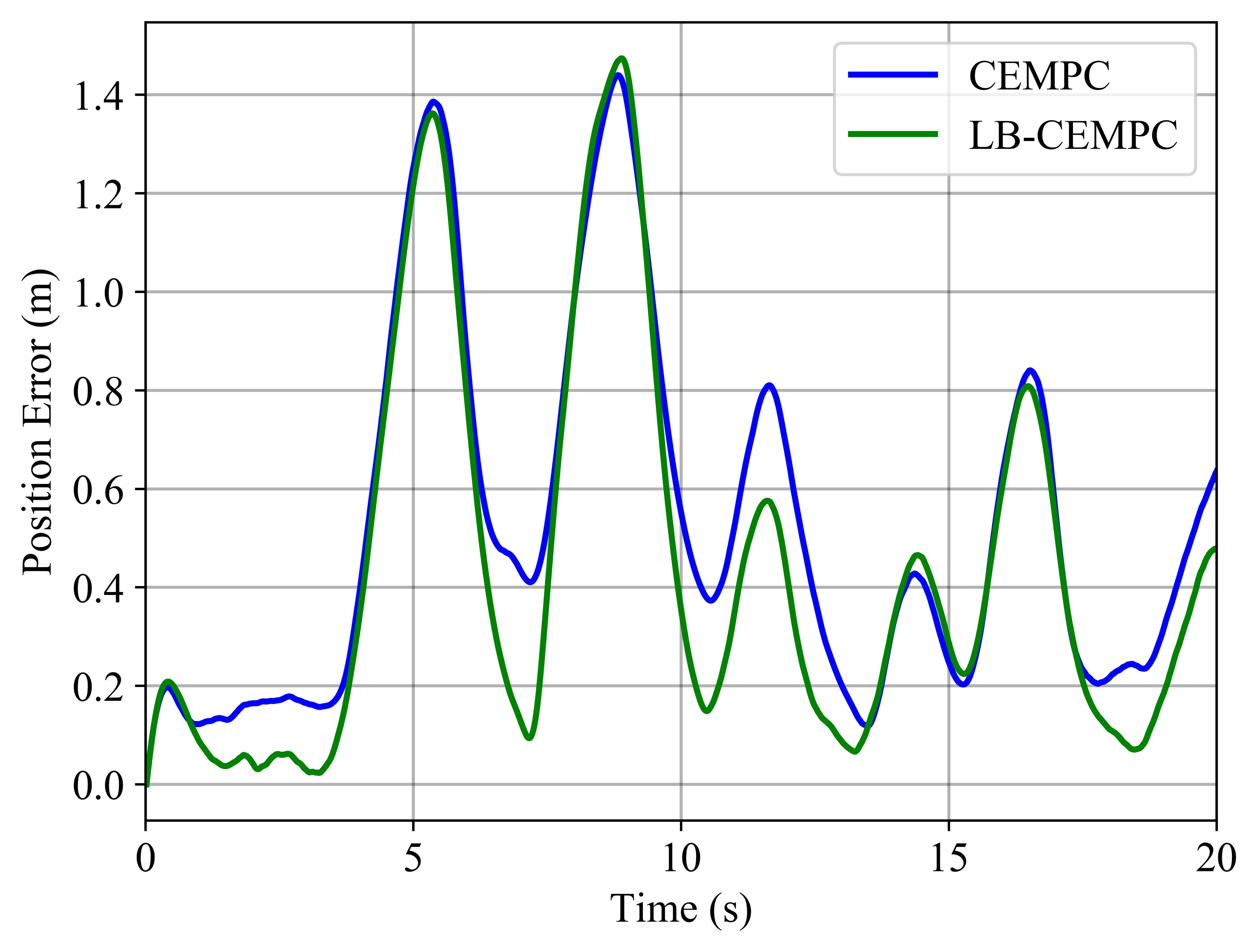}
		}\hspace{5mm}
		\subfigure[Wind-4.]{
			\label{fig:MPC_error_wind4}
			\includegraphics[scale=0.045]{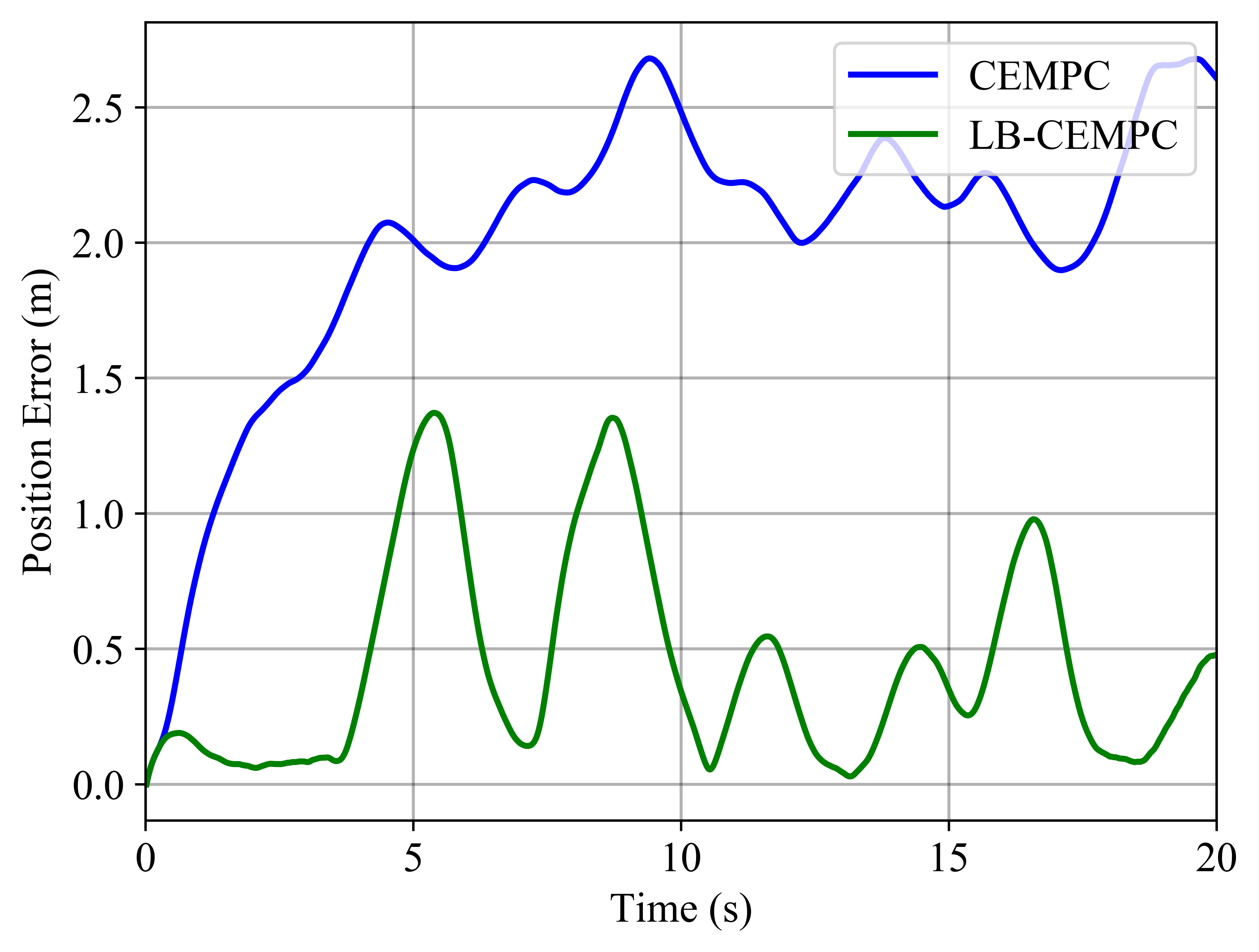}
		}\vspace{-5mm}
		\caption{Evolution of trajectory tracking errors using the CEMPC and the LB-CEMPC under the mild Wind-1 and the strong Wind-4 disturbances.}
		\label{fig:MPC_error}
	\end{figure*}
	
	As shown in Table~\ref{table:table_results} and Fig.~\ref{fig:mpcrmse}, the CEMPC with a longer horizon achieves smaller tracking RMS error under mild Wind-1 or Wind-2, due to the robustness from the predictivity and receding horizon optimization inherent in the MPC \cite{mayne2014model}. However,  under larger wind disturbances, e.g. Wind-3 and Wind-4, the RMS error of the CEMPC instead increases as the prediction horizon gets longer, since the accumulation of the model error along the multi-step predictions degrades the control performance. The tracking errors of the CEMPC and LB-CEMPC with a prediction horizon of $T_h=0.6s$ under Wind-1 and Wind-4 are shown in Fig.~\ref{fig:MPC_error}. It can be seen that the tracking errors of the CEMPC are similar to that of the LB-CEMPC under Wind-1, while the tracking errors of the CEMPC get quite high without the IGPs under the stronger Wind-4. These results indicate that the LB-CEMPC benefits from the IGPs learning and compensating the wind disturbances.
	
	\begin{figure*}[t]
		\centering
		\subfigure[Tracking error.]{
			\label{fig:Tracking error}
			\includegraphics[scale=0.045]{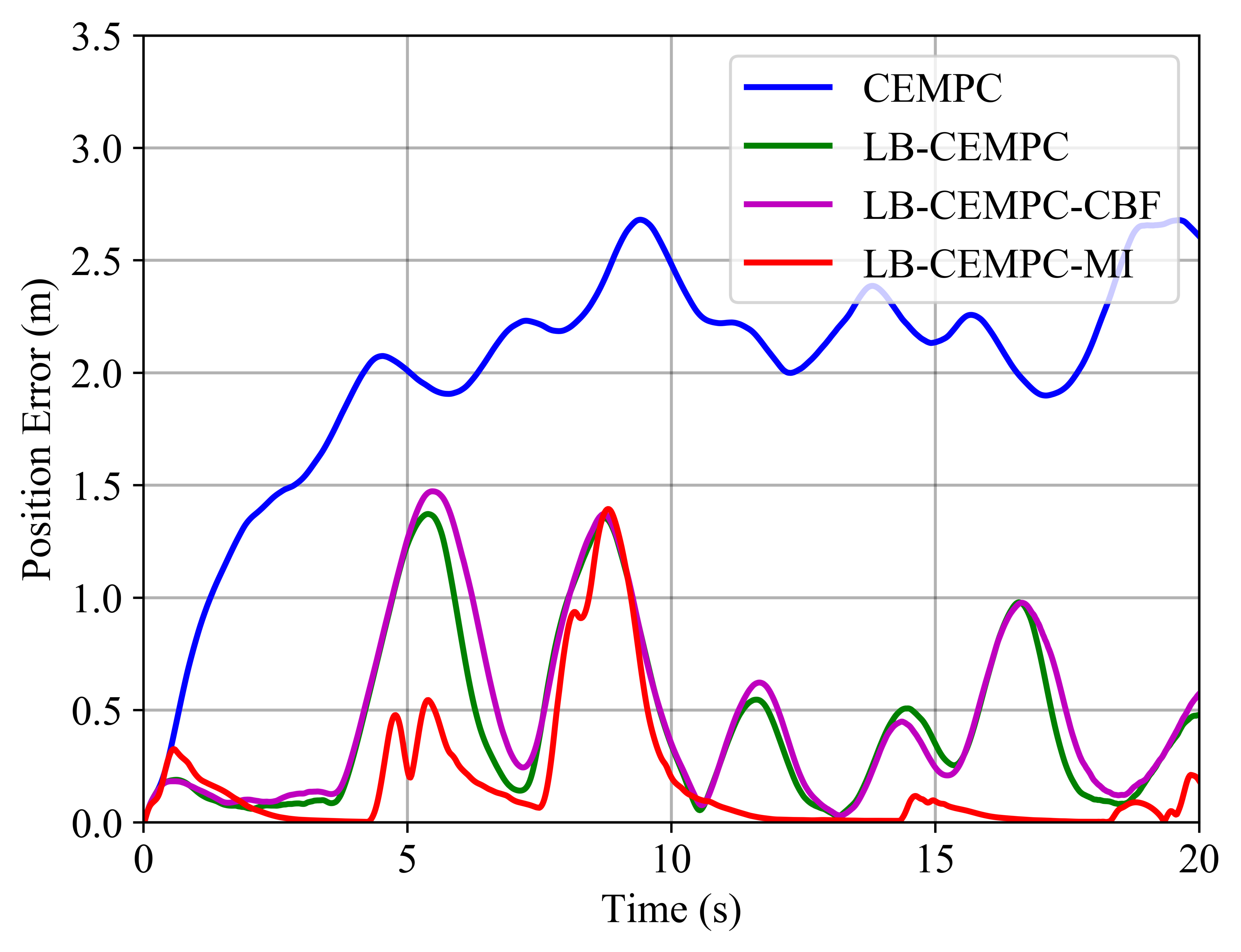}
		}\hspace{5mm}
		\subfigure[Velocity]{
			\label{fig:velocity}
			\includegraphics[scale=0.045]{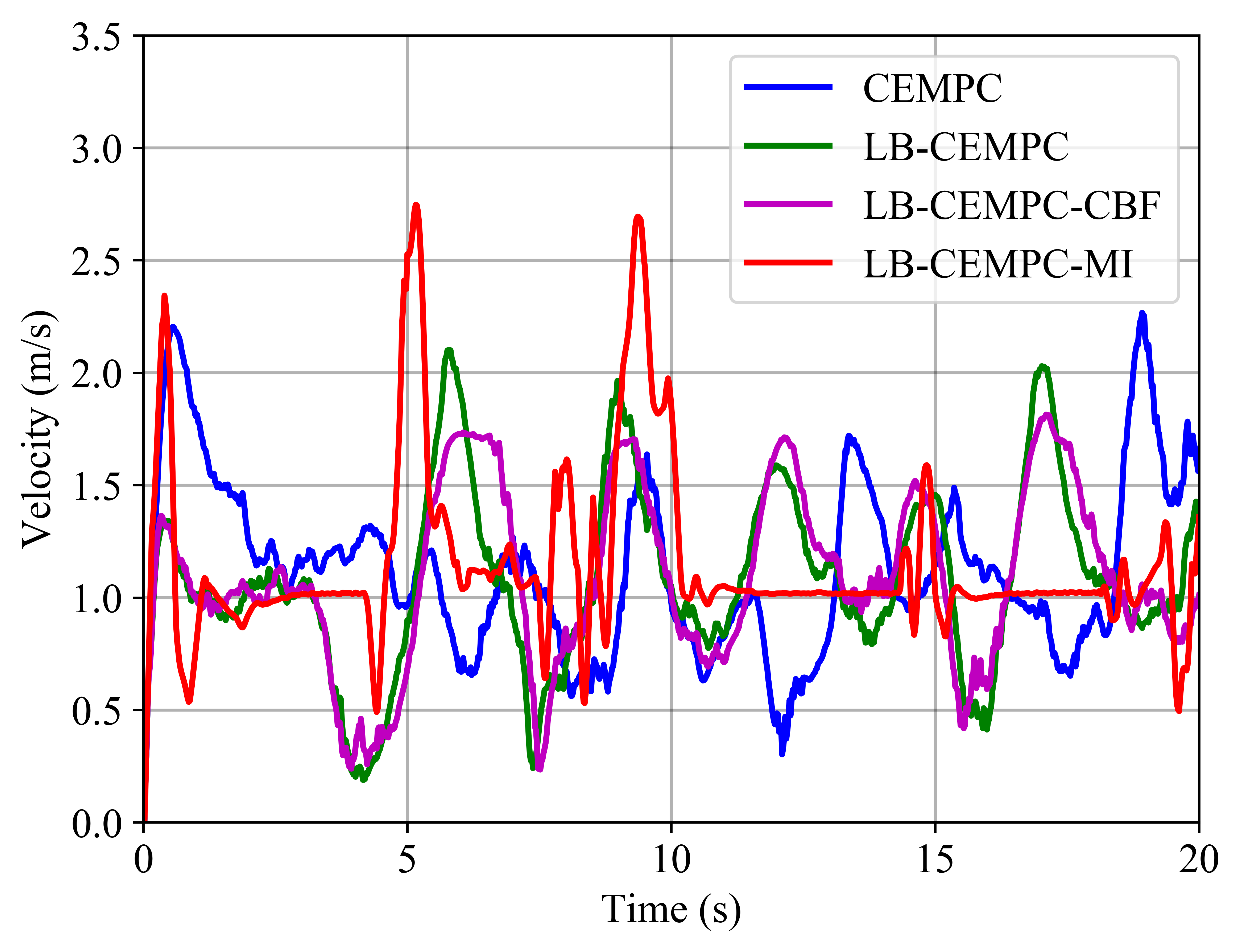}}
		\caption{Safe trajectory tracking in \textbf{Scenario 2}. (a) Tracking error with different controllers, and (b) Tracking velocity with different controllers under Wind-4 disturbances.}
		\label{fig:Scenario1_performance}
	\end{figure*}
	\begin{figure*}[!ht]
		\centering
		\subfigure[]{
			\label{fig:UAV Position}
			\includegraphics[scale=0.13]{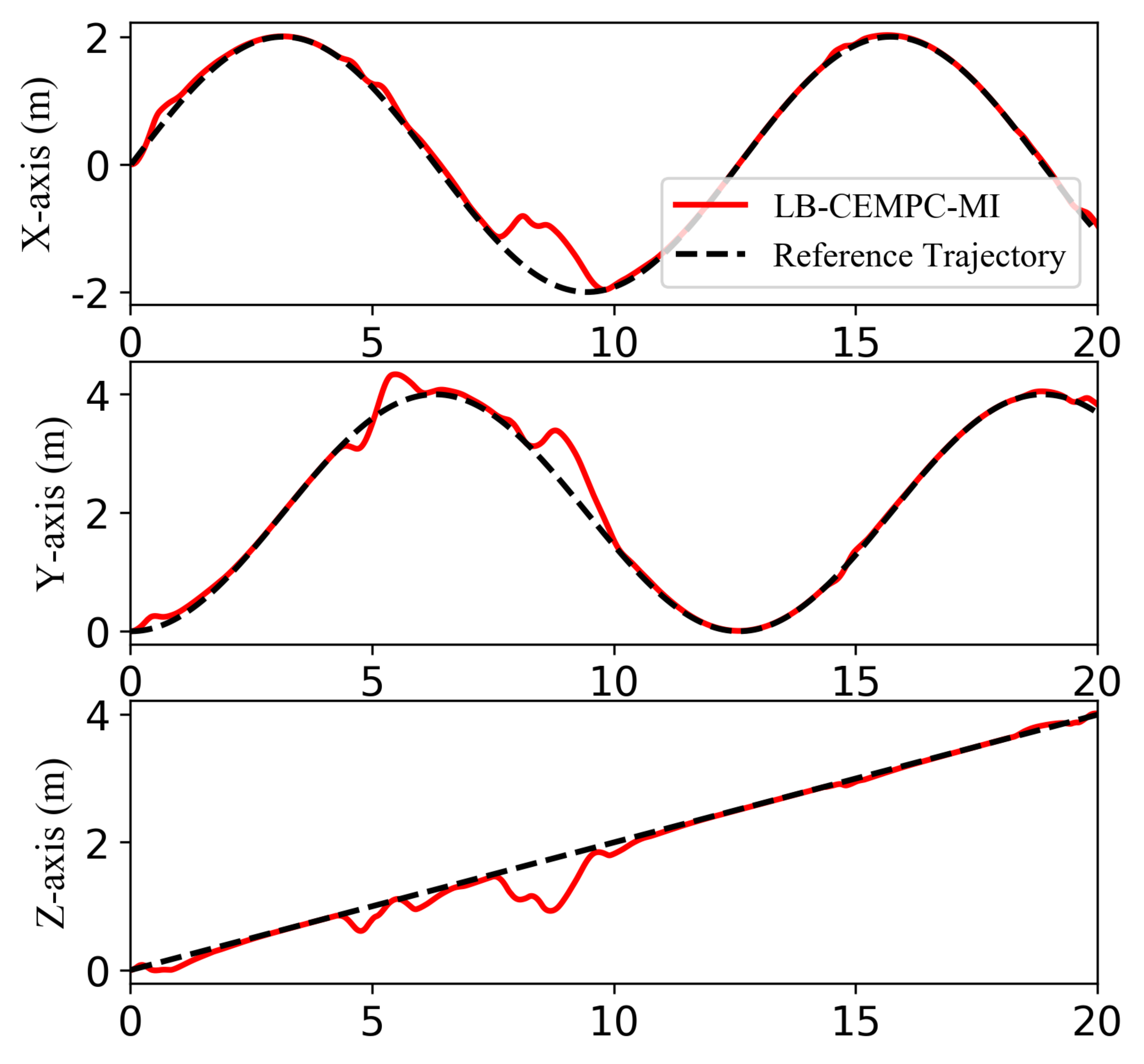}
		}\hspace{5mm}
		\subfigure[]{
			\label{fig:cbf1}
			\includegraphics[scale=0.045]{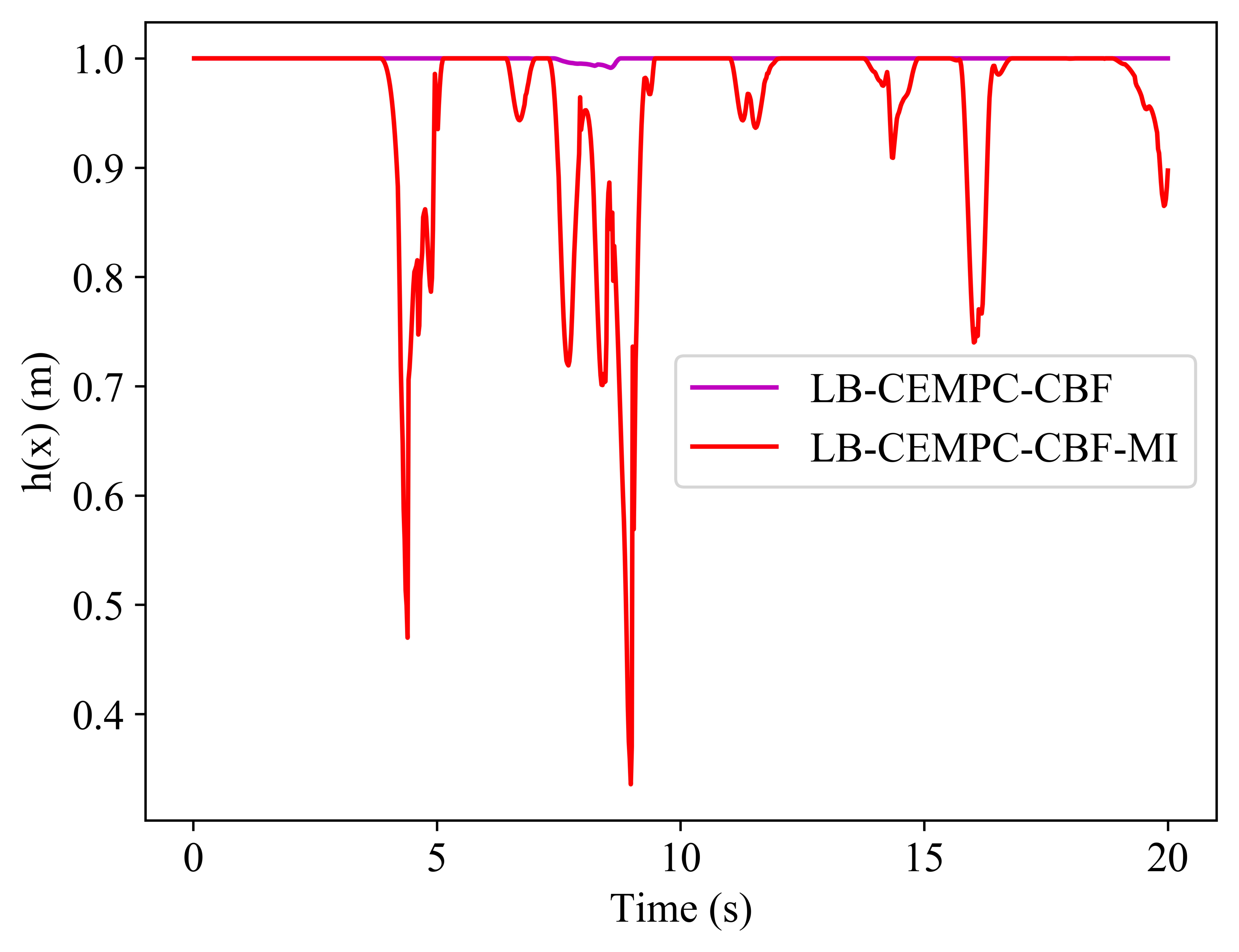}}
		\caption{Safe trajectory tracking in \textbf{Scenario 2}. (a) Position of the quadrotor with the proposed LB-CEM-MI control scheme, and (b) values of CBF in the LB-CEMPC-CBF and the LB-CEMPC-MI under Wind-4 disturbances.}
		\label{fig:Scenario1_MI}
	\end{figure*}

	\begin{figure*}[!ht]
		\centering
		\hspace{-5mm}	\subfigure[Snapshot at t = 6s.]{
			\label{fig:3D_1}
			\includegraphics[scale=0.09]{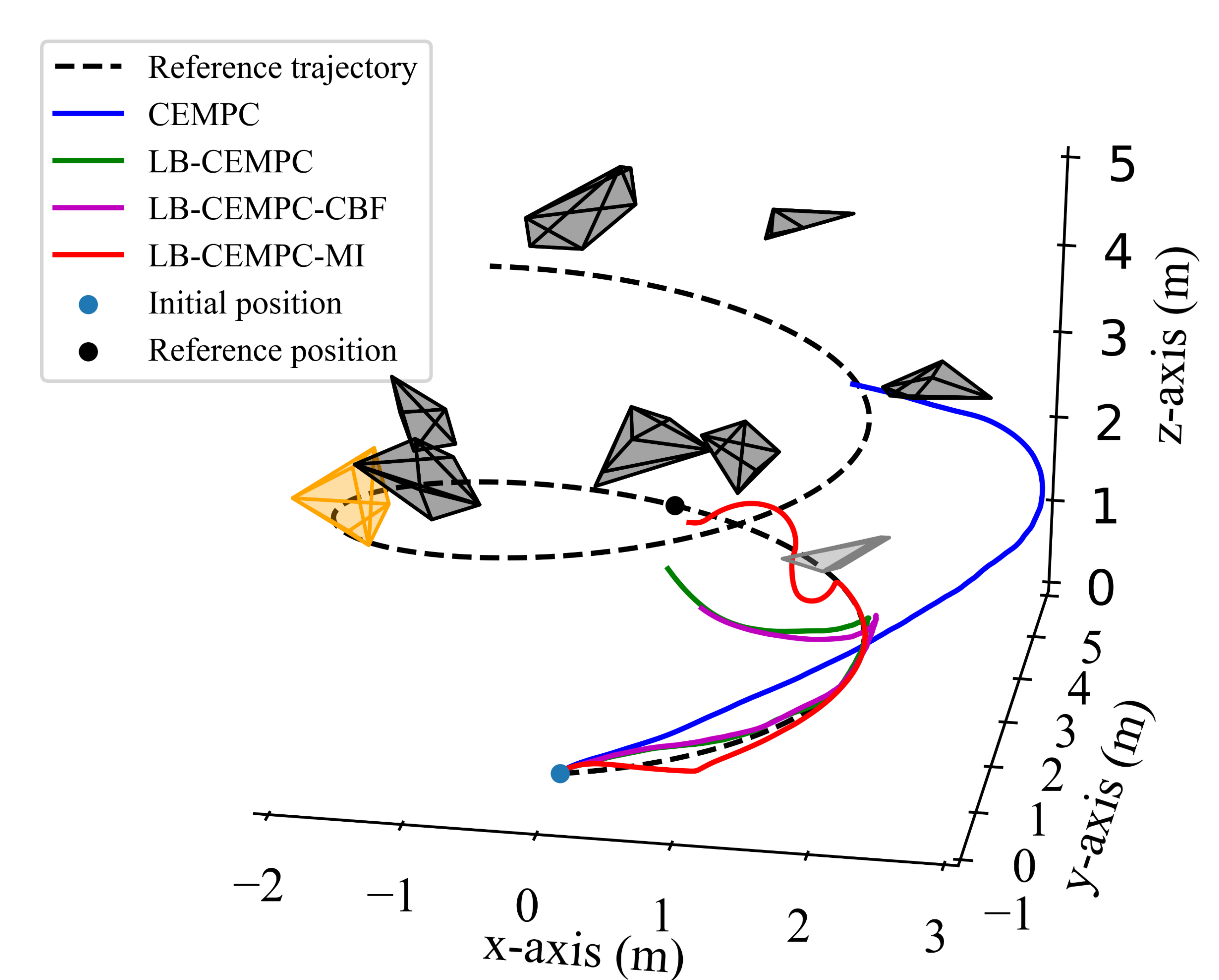}
		}\hspace{5mm}
		\subfigure[Snapshot at t = 8s]{
			\label{fig:3D_2}
			\includegraphics[scale=0.09]{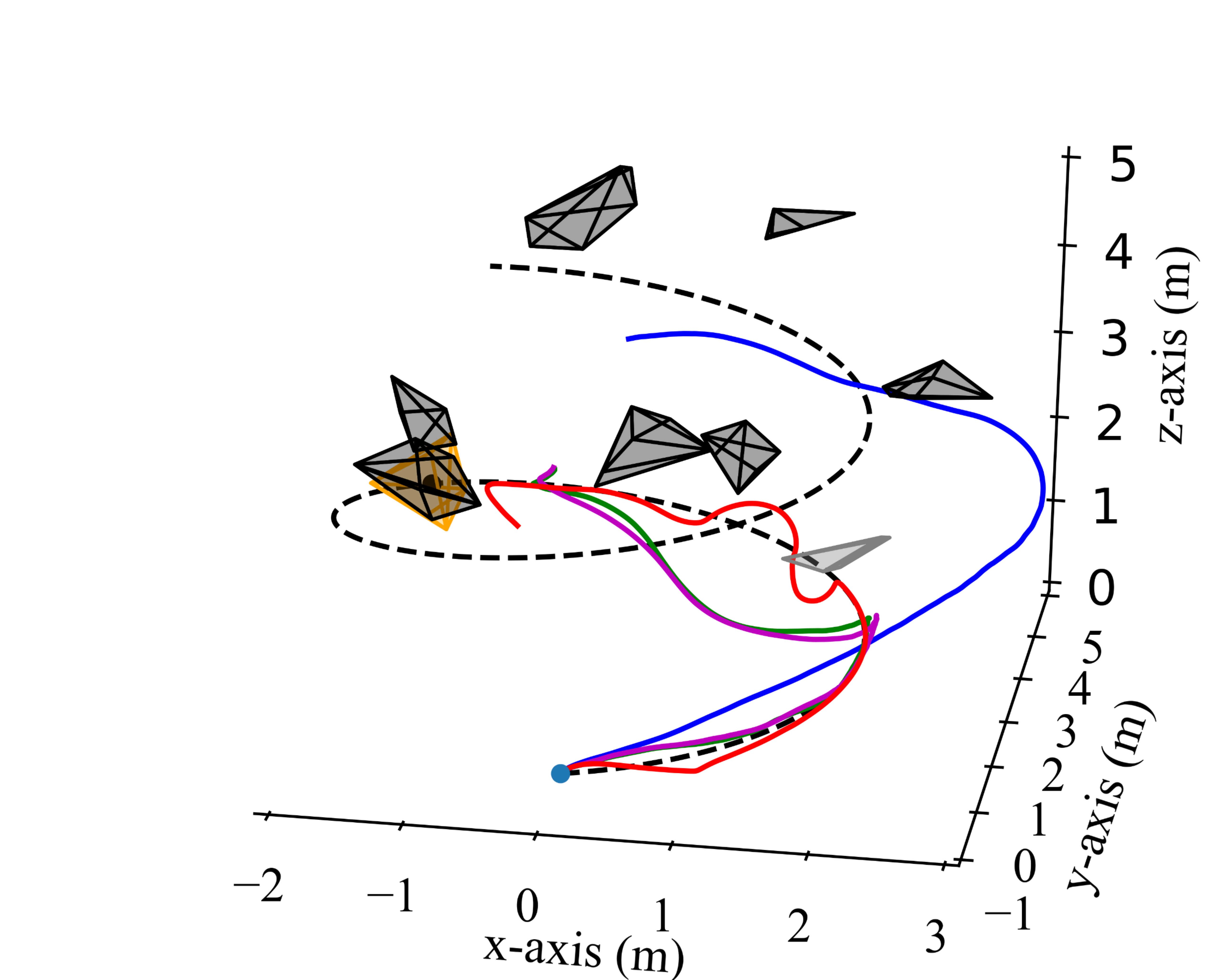}
		}
		\hspace{-5mm}		\subfigure[Snapshot at t = 12s]{
			\label{fig:3D_3}
			\includegraphics[scale=0.09]{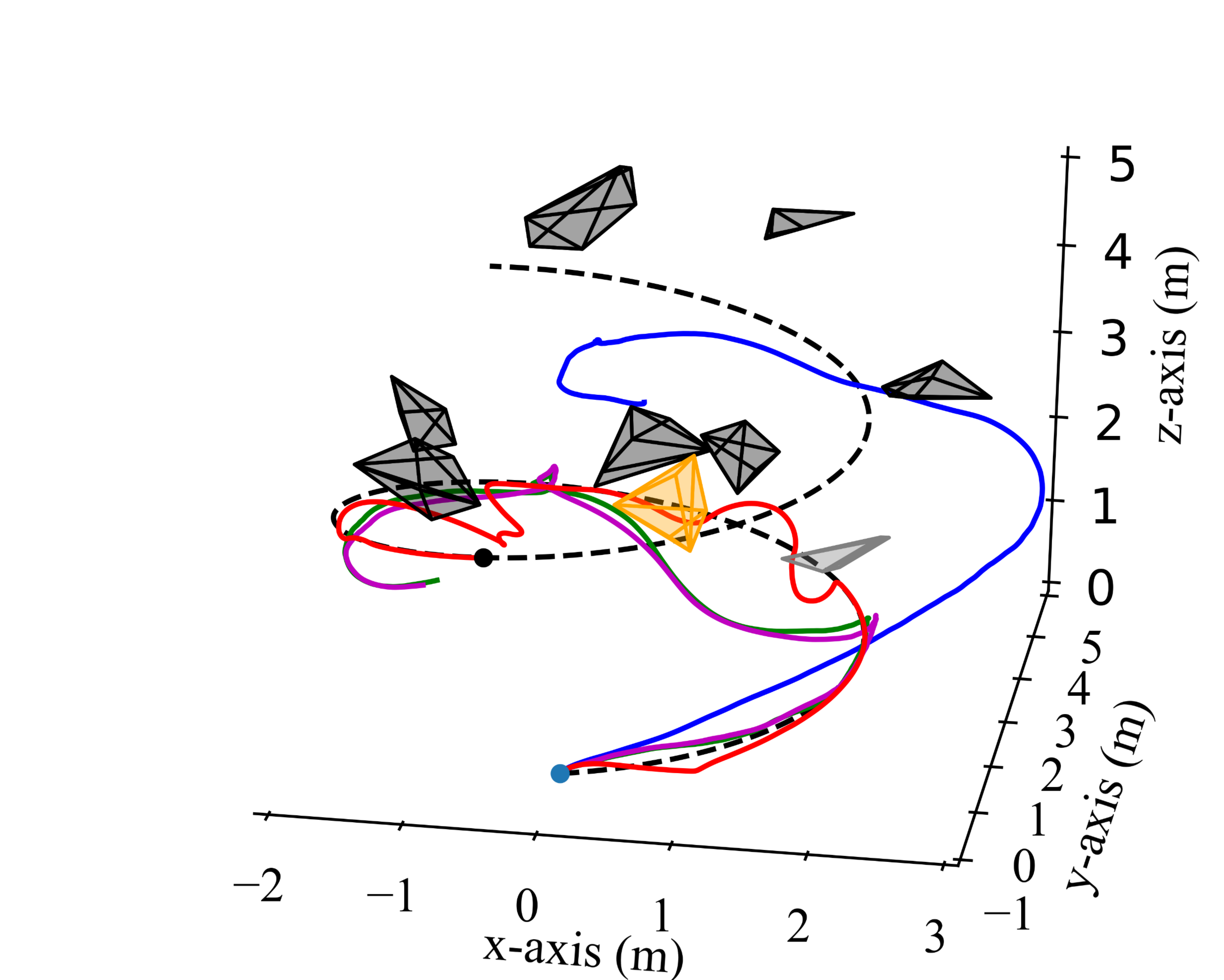}
		}\hspace{5mm}
		\subfigure[Snapshot at t = 20s]{
			\label{fig:3D_4}
			\includegraphics[scale=0.09]{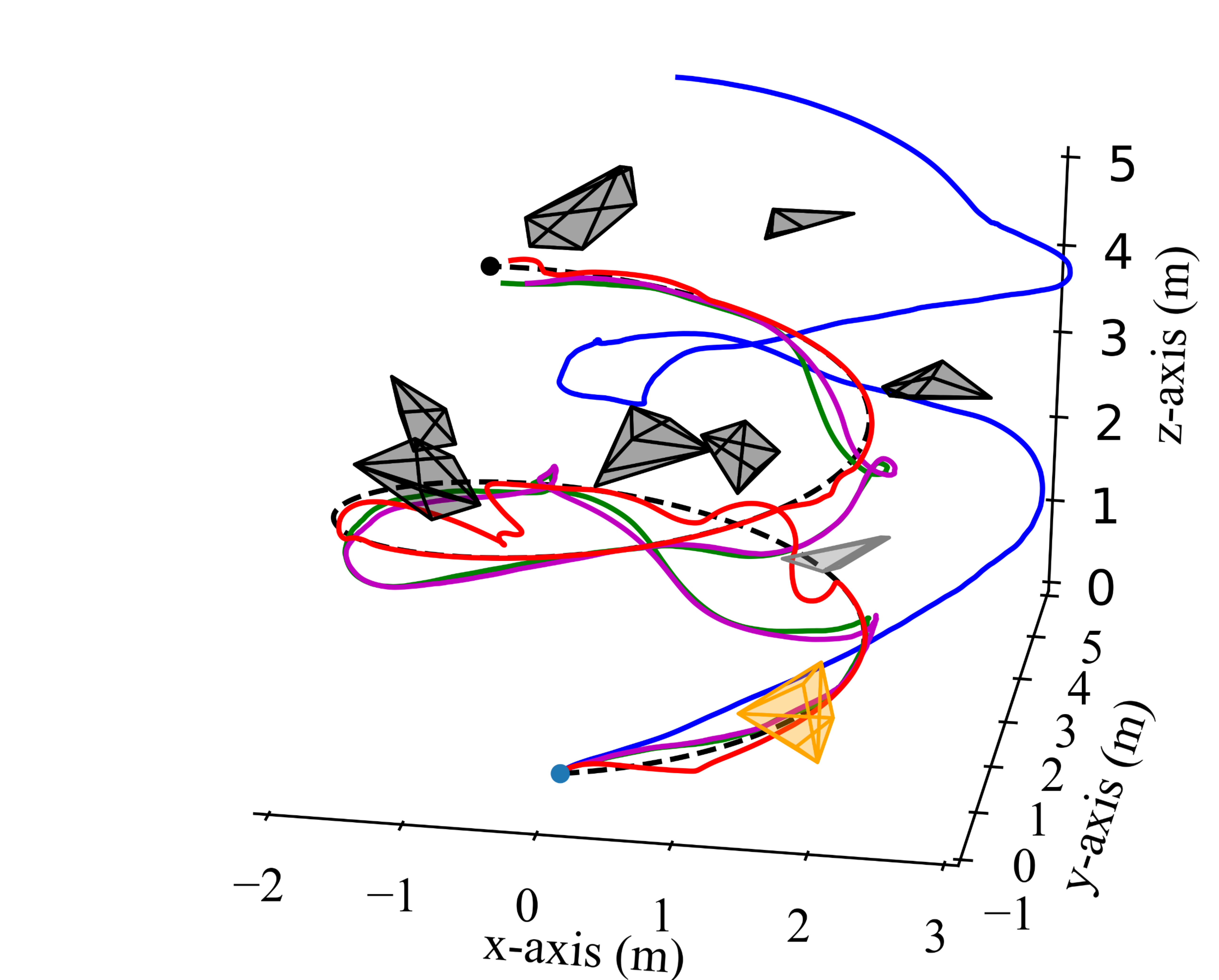}
		}\vspace{-5mm}
		\caption{Numerical validation of the trajectory tracking in \textbf{Scenario 1}, where a quadrotor flies through a densely cluttered obstacle field under uncertain Wind-4 disturbances. Snapshots of the simulation are shown in~\ref{fig:3D_1}-\ref{fig:3D_4}. The black dashed line denotes the reference trajectory. The irregular polyhedron in orange and gray denote the dynamic and static obstacles crossing the reference trajectory, respectively.}
		\label{fig:Simulation1}
	\end{figure*}

	The proposed LB-CEMPC-MI controller achieves the lowest tracking errors among the four controllers under different all four magnitudes of wind disturbances, as shown in Table~\ref{table:table_results}. Figs.~\ref{fig:Scenario1_performance},~\ref{fig:Scenario1_MI} and~\ref{fig:Simulation1} show the control performance with our proposed LB-CEMPC-MI control framework on the quadrotor under the Wind-4. At about 4.3 and 7.5 seconds, the quadrotor deviates from the reference trajectory with increasing tracking errors. It has changed its trajectory to safely avoid the collision with the grey static obstacle on the trajectory. Fig. \ref{fig:cbf1} shows that the barrier function keeps positive and safety is ensured when avoiding the obstacle. Besides, compared with the other three baseline controllers, the LB-CEMPC-MI controller can achieve smoother obstacle avoidance behavior as shown in Fig.~\ref{fig:Tracking error}. Moreover, Figs.~\ref{fig:Tracking error} and \ref{fig:Simulation1} illustrate that the MI auxiliary controller can effectively help the main task of trajectory tracking by guiding the sampling distribution. The quadrotor can be guided quickly back to the references when deviating from the desired trajectories due to the unexpected obstacle avoidance. These results demonstrate that the LB-CEMPC-MI control scheme can drive the system to track the reference trajectory stably when the trajectory is safe and can relax the tracking to avoid obstacles safely when it is to collide with the detected obstacles.

		\begin{table}[!ht]
			\renewcommand{\arraystretch}{1.1}
			\scriptsize
			\caption{Trajectory Tracking Control and Safety Performance with Different Weight Coefficients.}
			\label{table:table_predictivity}
			\centering
			\begin{tabular}{cc|cccccc}
				\hline
				$w_{i}$ & MI Scheme & 
				\begin{tabular}{@{}c@{}} Time to avoid \\ obstacle 1\\ (in second)\end{tabular}  & 
				\begin{tabular}{@{}c@{}} Time to avoid\\  obstacle 2\\ (in second)\end{tabular}  & 
				\begin{tabular}{@{}c@{}} Minimum Barrier\\ Value (in meter)\end{tabular}  &
				\begin{tabular}{@{}c@{}} Max tracking\\ error (in meter) \end{tabular}& 
				\begin{tabular}{@{}c@{}} RMS error \\ (in meter) \end{tabular}&
				\begin{tabular}{@{}c@{}} {Collide} \\ {or not} \end{tabular}\\
				\hline
				{10}                           & {-}                           & {3.283} & {11.010} & {0.990} & {1.952} & {0.630} & {no}  \\
{1}                            & {-}                           & {-}     & {-}      & {-}     & {-}     & {-}     & {yes} \\
{0.1}                          & {-}                           & {-}     & {-}      & {-}     & {-}     & {-}     & {yes} \\
{0.01}                         & {-}                           & {-}     & {-}      & {-}     & {-}     & {-}     & {yes} \\
{0}                            & {-}                           & {-}     & {-}      & {-}     & {-}     & {-}     & {yes} \\
				10      & CBF   & 3.283 & 11.010 & 0.994 & 1.957 & 0.632 & {no} \\
				1       & CBF   & 3.737 & 11.136 & 0.427 & 1.321 & 0.311 & {no} \\
				0.1     & CBF   & 3.737 & 11.364 & 0.617 & 0.772 & 0.214 & {no} \\
				0.01    & CBF   & 4.289 & 11.710 & 0.485 & 0.703 & 0.167 & {no} \\
				0       & CBF   & 4.293 & 11.717 & 0.481 & 0.723 & 0.190 & {no} \\
				\hline
			\end{tabular}
		\end{table}
		
	\begin{figure*}[!ht]
		\centering
		\subfigure[Snapshot at t = 6s.]{
			\label{fig:3D_21}
			\includegraphics[scale=0.09]{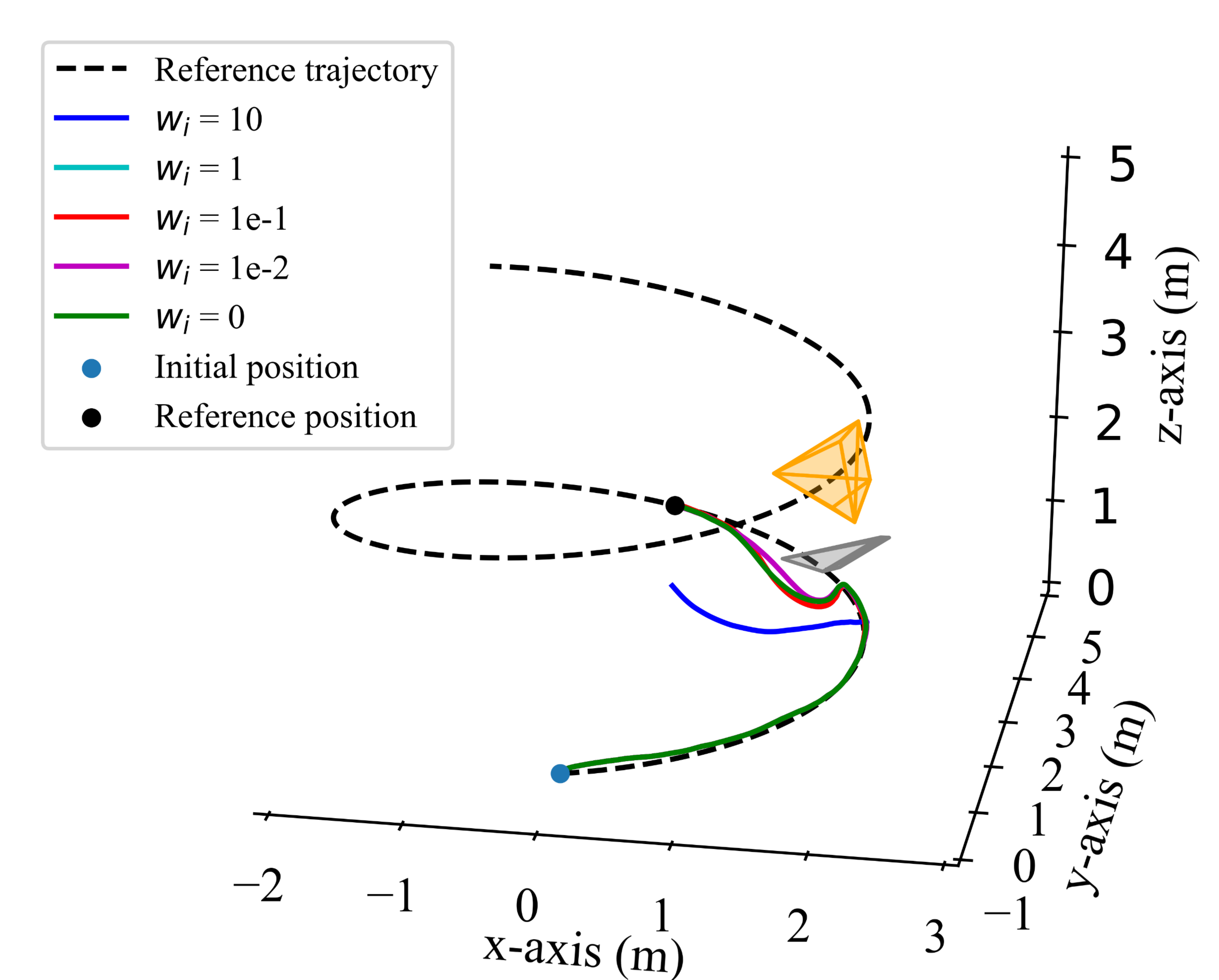}
		}\hspace{5mm}
		\subfigure[Snapshot at t = 12s]{
			\label{fig:3D_22}
			\includegraphics[scale=0.09]{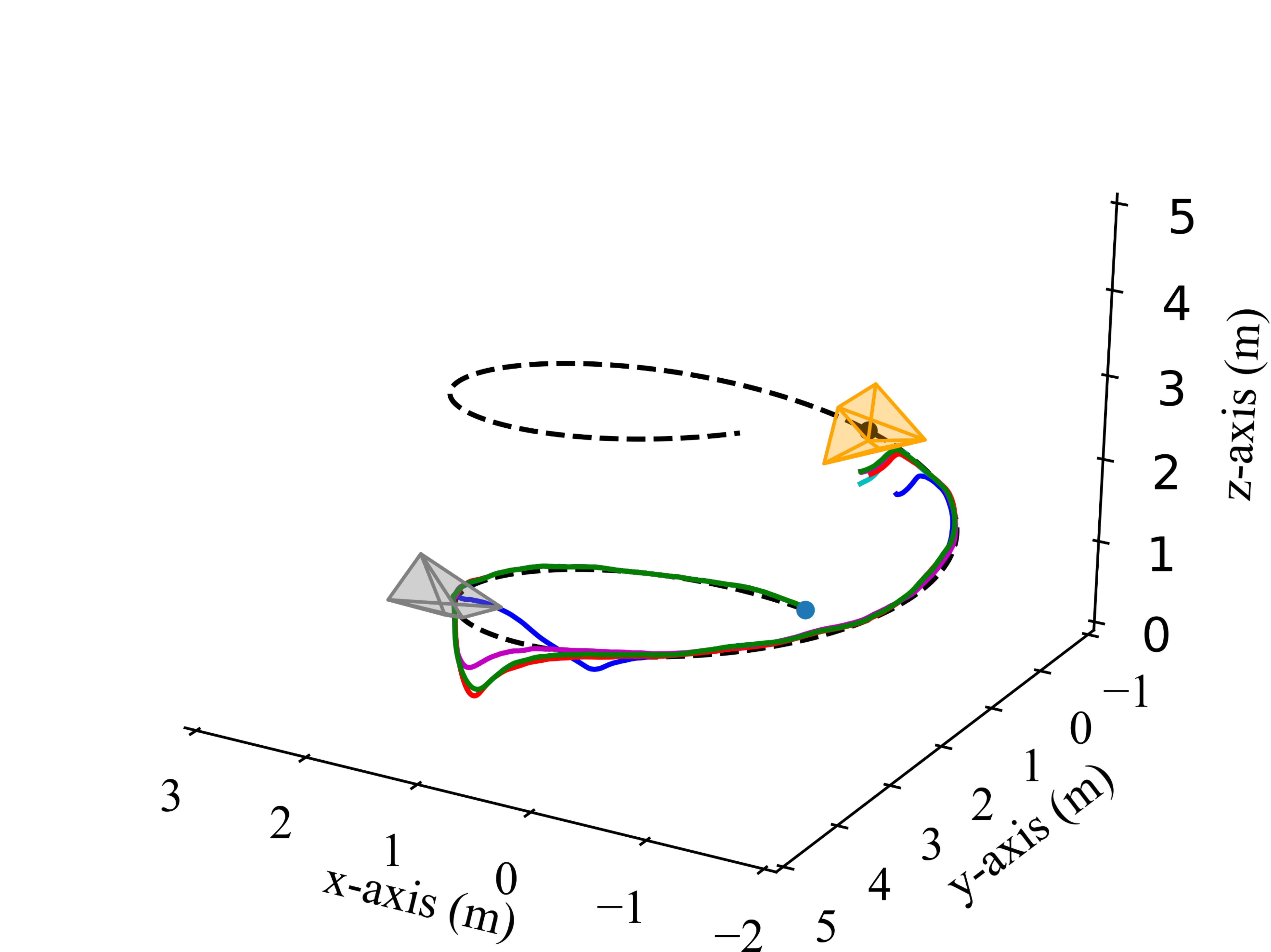}
		}\vspace{-2mm}
		\subfigure[Snapshot at t = 16s]{
			\label{fig:3D_23}
			\includegraphics[scale=0.09]{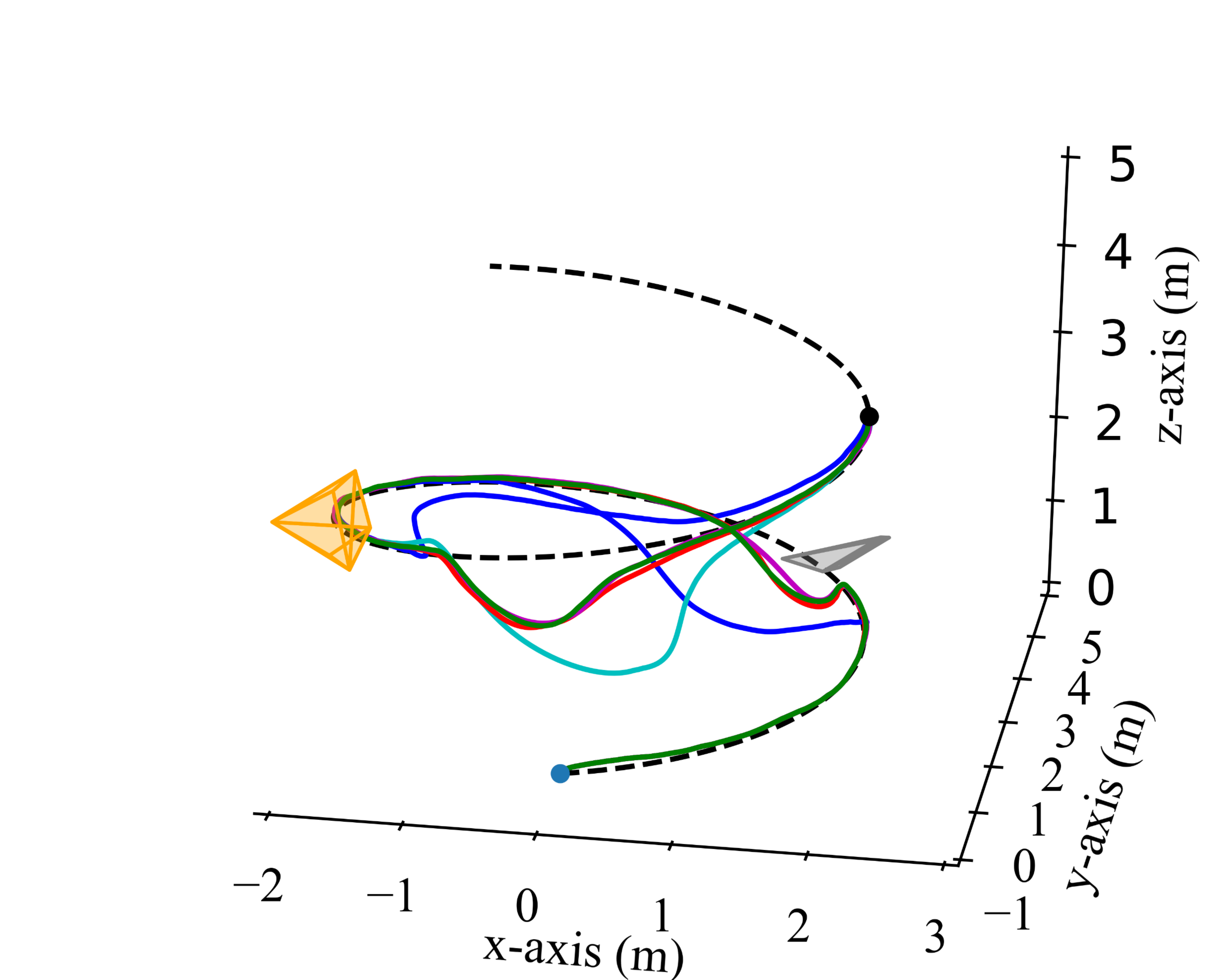}
		}\hspace{5mm}
		\subfigure[Snapshot at t = 20s]{
			\label{fig:3D_24}
			\includegraphics[scale=0.09]{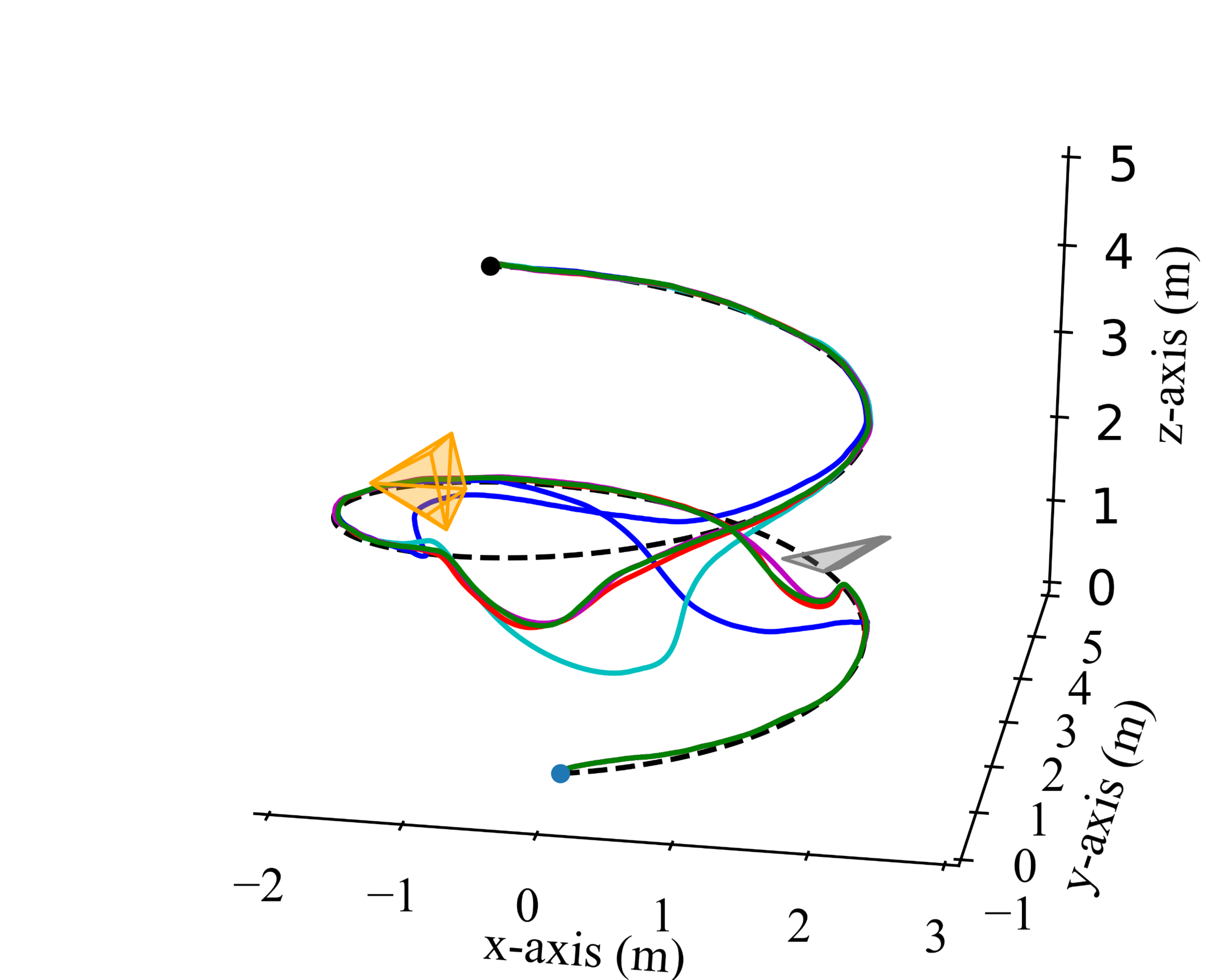}
		}
		\caption{Numerical validation of the trajectory tracking in the \textbf{Scenario 2}, where a quadrotor tracks the reference spiral trajectory under uncertain Wind-2 disturbances. Snapshots of the simulation are shown in~\ref{fig:3D_21}-\ref{fig:3D_24}. The black dashed line denotes the reference trajectory $p_d(t)$. The irregular polyhedron in orange and gray denote the dynamic and static obstacles crossing the reference trajectory, respectively. The quadrotor tracks the reference trajectory and strictly guarantees non-collision with the obstacles.}
		\label{fig:Simulation2}
	\end{figure*}
	
	\begin{figure}[!ht]
		\centering
		\subfigure[]{
			\label{fig:position_error_cbf}
			\includegraphics[scale=0.045]{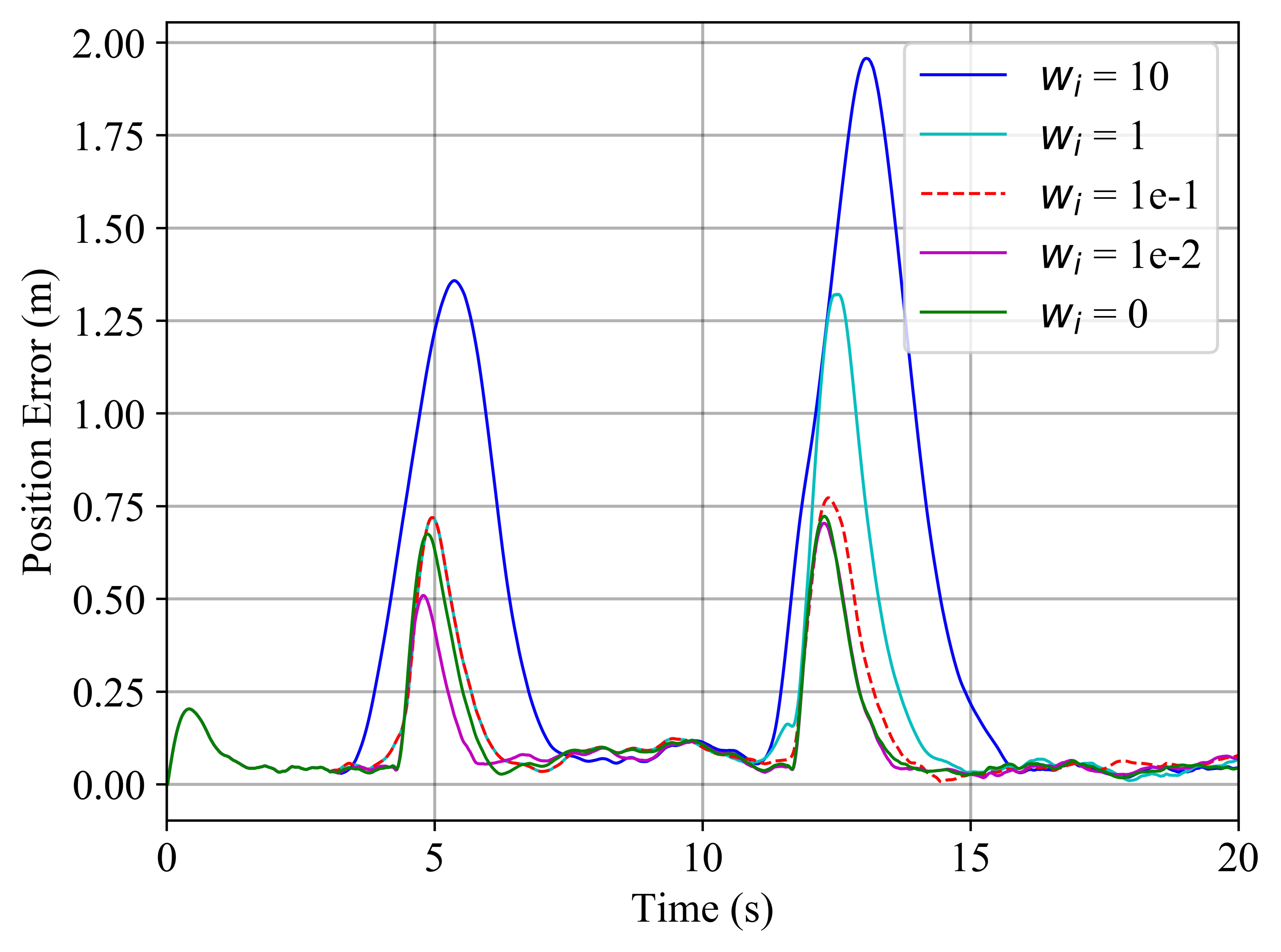}
		}
		\subfigure[]{
			\label{fig:bfv_w1}
			\includegraphics[scale=0.045]{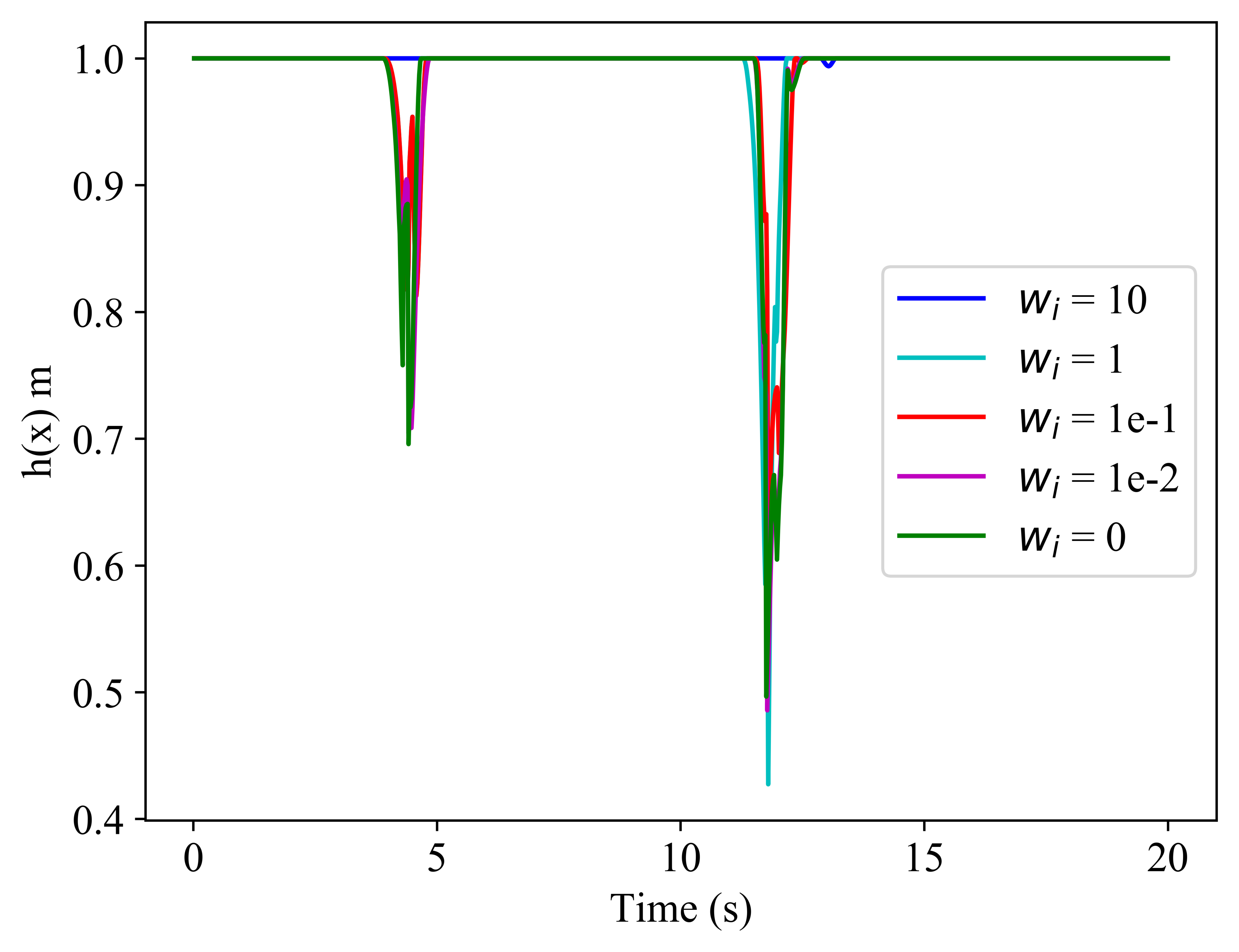}
		}
		\caption{Simulation results in \textbf{Scenario 2}. The prediction horizon is $T_h = 0.6s$. (a) The evolution of tracking errors with different weight coefficients $w_{1}$, where the dashed red line is used to distinguish the tracking error using the controller with $w_{1}=1e-1$; (b) The values of the CBF.}
		\label{fig:tracking_cbf}
	\end{figure}  
	\subsubsection{Trade-off Between Safety and Tracking Performance}
	\label{trade-off between safety and stability}
	To validate the trade-off between safety and tracking performance, \textbf{Scenario 2} with Wind-2 is fixed in this part. To demonstrate the effects of predictivity inherent in the MPC and the CBF-enforced safety scheme, we only keep the CBF (\ref{cbf}) in the MI scheme without the CLF-based sampling guidance scheme and design the LB-CEMPC algorithm with different coefficients $w_{i}$ in (\ref{cost_func}) for obstacle avoidance. {Controllers without MI scheme are also compared with controllers with CBF scheme, as shown in Table \ref{table:table_predictivity}.}
	
	

	Statistic of the simulations is listed in Table \ref{table:table_predictivity} with the trajectories graphically shown in Fig.~\ref{fig:Simulation2} and tracking errors shown in Fig. \ref{fig:position_error_cbf}. We highlight four key takeaways from these results. Firstly, the controllers with $w_{i} = 10$ predict to avoid the unexpectedly gray static obstacle and the orange dynamic obstacle in advance at $3.28s$ and $11.01s$ seconds, respectively, while the controllers with $w_{i} = 0$ avoid these two obstacles at later $4.29s$ and $11.72s$. It illustrates the proposed method can keep the predictive ability for obstacle avoidance even under time-varying environmental disturbances. Secondly, the RMS and maximum tracking errors increase with a larger positive weight $w_{i}$, while the time for obstacle avoidance decreases as observed in Table \ref{table:table_predictivity}. This indicates that there is a trade-off between safety margin and tracking performance, which can be adjusted with the weight coefficient $w_{i}$ in the cost function (\ref{cost_func}). {When we need a smaller $w_i$ for better tracking performance, the CBF scheme is necessary to avoid collisions. Thirdly, when safety constraints are already satisfied, that is, $\omega_i$ = 10, it can be seen from Table \ref{table:table_predictivity} that the time to avoid the static and dynamic obstacles are the same whether the CBF scheme exists or not. And the max tracking error and RMS error are also similar. This reflects the minimal intervention of the MI scheme.} Besides, to obtain an intuitive view on the safety performance under disturbances, Fig. \ref{fig:bfv_w1} shows the safety performance of the quadrotor with CBF scheme tracking the reference trajectory. It can be seen that the values of the CBF keep positive, which indicates that the position of the quadrotor always stays within the safe obstacle-free ellipsoid region. 
	
	\subsubsection{Real-time Performance of CEMPC}
	\begin{table}[t]
\centering
\caption{Running Time and Tracking Performance of CEMPC with Different Parameters}
\label{tab:cempc_time}
\begin{tabular}{cc|cc}
\hline
{Predictive}        & {Number of}   & {CEMPC optimization} & {RMS error} \\
{Horizon $T_h$ (s)} & {Samples $M$} & {Time (ms)}          & {(m)}       \\ \hline
{0.2}               & {100}         & {38.960}             & {1.808}     \\
{0.2}               & {200}         & {47.607}             & {1.783}     \\
{0.2}               & {300}         & {52.106}             & {1.714}     \\
{0.4}               & {100}         & {73.575}             & {1.165}     \\
{0.4}               & {200}         & {88.966}             & {1.007}     \\
{0.4}               & {300}         & {103.984}            & {0.950}     \\
{0.6}               & {100}         & {114.629}            & {0.968}     \\
{0.6}               & {200}         & {127.892}            & {0.792}     \\
{0.6}               & {300}         & {151.410}            & {0.722}     \\ \hline
\end{tabular}
\end{table}
    
\begin{figure}[t]
\centering
\includegraphics[scale=0.7]{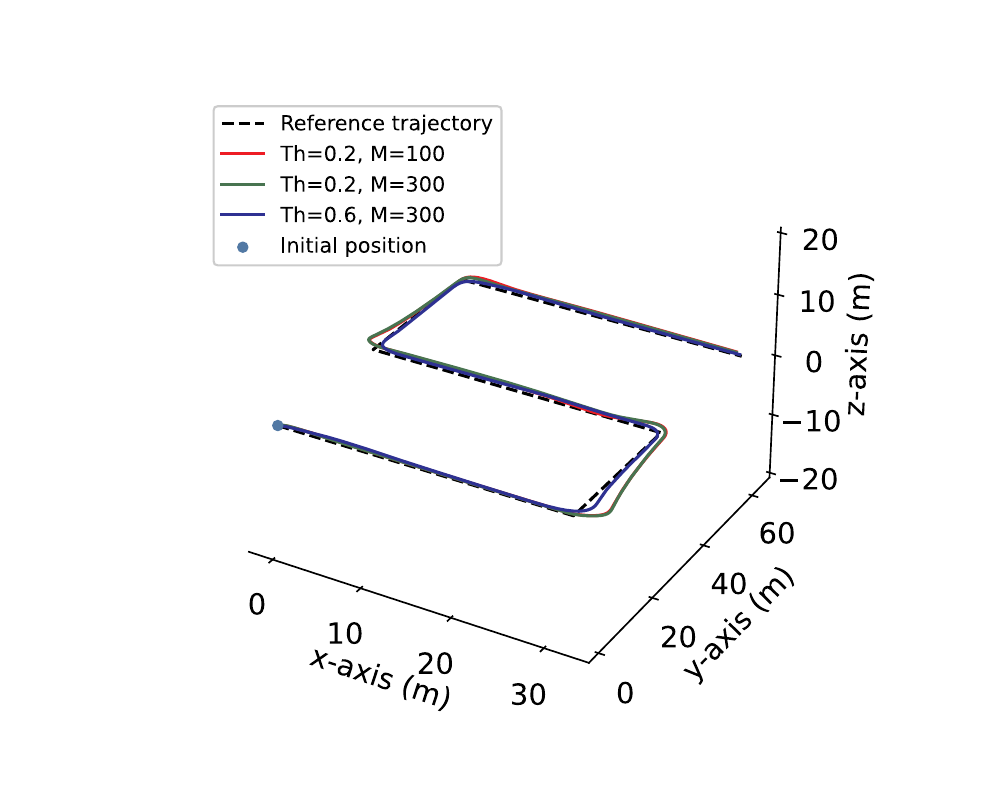}
\caption{Numerical validation of the trajectory tracking in \textbf{Scenario 3}, where quadrotors with different MPC parameters track the reference trajectory. \label{fig:S3}}
\end{figure}
    
    {The time consumption and tracking performance of CEMPC with different parameters are compared in \textbf{Scenario 3}, as shown in Table \ref{tab:cempc_time}.
    The trajectories are graphically shown in Fig.\ref{fig:S3}.
    As the iteration number is set the same, the running time increases with larger sample size or prediction horizon, which corresponds with the theoretical analysis of the algorithm complexity.
    From Table \ref{tab:cempc_time}, it can be seen that increasing the number of samples helps less for tracking performance than increasing the predictive horizon. 
    In addition, the improvement of tracking performance by increasing the predictive horizon is also limited, as increasing $T_h$ for $0.2s$ to $0.4s$ decreases the RMS errors much more than increasing $T_h$ for $0.4s$ to $0.6s$.
    Therefore, the predictive horizon is required to {be} chosen according to the circumstances and the sample size can be fixed for better real-time performance. Note that the code and computation {are} not been optimized for speed.}
	
	\subsection{Discussion}
	\label{section:Discussion}
	The predictive nature of the MPC brings proactivity to the control scheme, which improves the trajectory tracking accuracy and introduces conservation in terms of safety to the system, as shown in Section~\ref{section:results}.  
	It results in safer behaviors of the system but also degrades the control performance of the main task, e.g. trajectory tracking accuracy. Such trade-off could be considered by adjusting the weight coefficients $w_i$ in the cost function (\ref{cost_func}) according to the requirements of practical applications. For example, if larger weight coefficients $w_i$ are set to achieve better foresight to avoid obstacles, there will be a larger tracking RMS error, given a determined prediction horizon $T_h$. In contrast, if we choose small weight coefficients $w_i$ to achieve low tracking RMS error, there will be poor foresight for obstacle avoidance. Actually, it is flexible for the designer to set the weight coefficients $w_i$ according to their considerations. The designed MI scheme reserves the safety and guides the optimization, which enables the customized and convenient design of the cost function for high-level tasks.

	\section{Conclusion}
	\label{section:conclusion}
	In this paper, a safe learning-based MPC architecture is designed to optimize the nonlinear system with a {non-differentiable} objective function under uncertain environmental uncertainties. Our proposed approach allows the convenient design of objective function using simple but {non-differentiable} running-cost terms. The IGPs are utilized to estimate model uncertainties with a low computational burden and augment the prior predictive model in the MPC. Solved with the sampling-based CEM, the proposed CEMPC is augmented by an auxiliary controller based on the control Lyapunov function and the CBF to guide the sampling process and theoretically endows the system with safety in a way of minimal intervention. We provide numerical simulation results comparing the CEMPC, LB-CEMPC, LB-CEMPC-CBF, and the LB-CEMPC-MI algorithms on a quadrotor trajectory tracking and obstacle avoidance task under different wind disturbances in two different scenarios. The results show that the proposed LB-CEMPC-MI algorithm can successfully and safely optimize the quadrotor system with a conveniently designed {non-differentiable} objective function, achieving accurate tracking performance and safe obstacle avoidance under uncertain wind disturbances. 
	In future work, the proposed learning-based MPC framework will be verified in hardware platforms under realistic conditions, and extended to solve a navigation problem considering uncertain environmental disturbances.
	\medskip
	\bibliographystyle{unsrt}
 	\bibliography{mybibfile}

\newpage
\begin{appendices}
\label{app1}
\setcounter{equation}{0}
\renewcommand{\theequation}{A.\arabic{equation}}
\section{Proof of remark 5}

{
    Let $\sigma^2_{n+1}(x_*)$ be the predictive variance of a Gaussian process regression model at $x_*$ given a dataset of size $n+1$.
    The corresponding predictive variance using a dataset of only the first $n$ training points is denoted $\sigma^2_{n}(x_*)$.
    Then $\sigma^2_{n+1}(x_*) \leq \sigma^2_{n}(x_*)$.
    }
    \begin{proof}
    {
    Firstly we have
\def\theequation{A.\arabic{equation}}
\setcounter{equation}{0}
\begin{align}
        \sigma_n^{2}(x_{*})&=k(x_{*},x_{*})-\mathbf{k}_{n,*}^\intercal(K_{\sigma,n}+\sigma_{noise}^{2}I)^{-1}\mathbf{k}_{n,*},  \\
        \sigma_{n+1}^{2}(x_{*})&=k(x_{*},x_{*})-\mathbf{k}_{n+1,*}^\intercal(K_{\sigma,n+1}+\sigma_{noise}^{2}I)^{-1}\mathbf{k}_{n+1,*}
\end{align}
    where
\begin{align}
        K_{\sigma,n+1} &= 
\begin{bmatrix}
K_{\sigma,n}+\sigma_{noise}^{2}I & {\gamma} \\  {\gamma}^\intercal & k(x_{n+1},x_{n+1})
\end{bmatrix}
, \notag \\ \mathbf{k}_{n,*}^\intercal&=[k(x_1,x_*),k(x_1,x_*), \ldots,k(x_{n},x_*)]^\intercal, \notag \\ \mathbf{k}_{n+1,*}^\intercal&=[k(x_1,x_*),k(x_1,x_*), \ldots,k(x_{n+1},x_*)]^\intercal, \notag \\ {\gamma}^\intercal&=[k(x_1,x_{n+1}),k(x_1,x_{n+1}), \ldots,k(x_{n+1},x_{n+1})]^\intercal.   \notag
\end{align}
}
    
For simplicity, let $K_{n}=K_{\sigma,n}+\sigma_{noise}^{2}I$, $K_{n+1}=K_{\sigma,n+1}+\sigma_{noise}^{2}I$, $b = k(x_{n+1},x_{n+1})$ and $e = k(x_{n+1},x_*)$.
 Then we have $K_{n+1} = 
\begin{bmatrix}
K_{n} & {\gamma} \\  {\gamma}^\intercal & b
\end{bmatrix}
$, $\mathbf{k}_{n+1,*}=
\begin{bmatrix}
\mathbf{k}_{n,*} \\ e
\end{bmatrix}
$ and
\begin{align}
        \sigma_n^{2}(x_{*})&=k(x_{*},x_{*})-\mathbf{k}_{n,*}^\intercal K_n^{-1}\mathbf{k}_{n,*},  \\
        \label{eq:sigma_n+1_2}
        \sigma_{n+1}^{2}(x_{*})&=k(x_{*},x_{*})-
\begin{bmatrix}
\mathbf{k}_{n,*}^\intercal & e
\end{bmatrix}
\begin{bmatrix}
K_{n} & {\gamma} \\  {\gamma}^\intercal & b
\end{bmatrix}
^{-1}
\begin{bmatrix}
\mathbf{k}_{n,*} \\ e
\end{bmatrix}
.
\end{align}
    Notice that $K_n^{-1}, K_{n+1}^{-1}$ are symmetric as $K_n$ and $K_{n+1}$ are symmetric, the inversion of the partitioned matrix in \ref{eq:sigma_n+1_2} is
\begin{equation}
        K_{n+1}^{-1}
        = 
\begin{bmatrix}
K_{n} & {\gamma} \\  {\gamma}^\intercal & b
\end{bmatrix}
^{-1}
        = 
\begin{bmatrix}
\tilde{K_{n}} & \tilde{{\gamma}} \\  \tilde{{\gamma}}^\intercal & \tilde{b}
\end{bmatrix}
        = 
\begin{bmatrix}
K_{n}^{-1} + K_{n}^{-1}{\gamma}\tilde{b}{\gamma}^\intercal K_{n}^{-1} & -K_{n}^{-1}{\gamma}\tilde{b} \\  -\tilde{b}{\gamma}^\intercal K_{n}^{-1} & \tilde{b}
\end{bmatrix}
,
\end{equation}
    where $\tilde{b} = (b-{\gamma}^\intercal K_{n}^{-1} {\gamma})^{-1}$. It is obvious that $\tilde{b} > 0$ as $
\begin{bmatrix}
K_{n} & {\gamma} \\  {\gamma}^\intercal & b
\end{bmatrix}
$ and $K_n$ are positive definite.
    Then
\begin{align}
        \sigma_n^{2}(x_{*}) - \sigma_{n+1}^{2}(x_{*})
        &= 
\begin{bmatrix}
\mathbf{k}_{n,*}^\intercal & e
\end{bmatrix}
\begin{bmatrix}
K_{n} & {\gamma} \\  {\gamma}^\intercal & b
\end{bmatrix}
^{-1}
\begin{bmatrix}
\mathbf{k}_{n,*} \\ e
\end{bmatrix}
 - \mathbf{k}_{n,*}^\intercal K_n^{-1}\mathbf{k}_{n,*} \notag  \\
        &= 
\begin{bmatrix}
\mathbf{k}_{n,*}^\intercal & e
\end{bmatrix}
\begin{bmatrix}
\tilde{K_{n}} & \tilde{{\gamma}} \\  \tilde{{\gamma}}^\intercal & \tilde{b}
\end{bmatrix}
\begin{bmatrix}
\mathbf{k}_{n,*} \\ e
\end{bmatrix}
 - \mathbf{k}_{n,*}^\intercal K_n^{-1}\mathbf{k}_{n,*} \notag  \\
        &= \mathbf{k}_{n,*}^\intercal\tilde{K_n} \mathbf{k}_{n,*} + \mathbf{k}_{n,*}^\intercal\tilde{{\gamma}}e + e\tilde{{\gamma}}^\intercal \mathbf{k}_{n,*} + e\tilde{b}e - \mathbf{k}_{n,*}^\intercal K_n^{-1}\mathbf{k}_{n,*}  \notag  \\
        &= \mathbf{k}_{n,*}^\intercal(K_{n}^{-1} + K_{n}^{-1}{\gamma}\tilde{b}{\gamma}^\intercal K_{n}^{-1}) \mathbf{k}_{n,*} - \mathbf{k}_{n,*}^\intercal K_{n}^{-1}{\gamma}\tilde{b}e - e\tilde{b}{\gamma}^\intercal K_{n}^{-1} \mathbf{k}_{n,*}+ e\tilde{b}e - \mathbf{k}_{n,*}^\intercal K_n^{-1}\mathbf{k}_{n,*}  \notag  \\
        &= \mathbf{k}_{n,*}^\intercal K_{n}^{-1}{\gamma}\tilde{b}{\gamma}^\intercal K_{n}^{-1} \mathbf{k}_{n,*} - \mathbf{k}_{n,*}^\intercal K_{n}^{-1}{\gamma}\tilde{b}e - e\tilde{b}{\gamma}^\intercal K_{n}^{-1} \mathbf{k}_{n,*} + e\tilde{b}e  \notag  \\
        &= \tilde{b}(\mathbf{k}_{n,*}^\intercal K_{n}^{-1}{\gamma} - e)({\gamma}^\intercal K_{n}^{-1} \mathbf{k}_{n,*} - e)  \notag  \\
        &= \tilde{b}(\mathbf{k}_{n,*}^\intercal K_{n}^{-1}{\gamma} - e)^2    \notag  \\
        &\geq 0.
\end{align}
    Then we have
\begin{equation}
        \sigma^2_{n+1}(x_*) \leq \sigma^2_{n}(x_*).\qed
\end{equation}

\end{proof}
\end{appendices}

\end{document}